\def\cref@section@alias{appendix}
\def\cref@subsection@alias{appendix}
\def\cref@subsubsection@alias{appendix}
\DeclareMathOperator*{\argmax}{arg\,max}
\DeclareMathOperator*{\argmin}{arg\,min}
\DeclareMathOperator{\unif}{{unif}}
\renewcommand{\epsilon}{\varepsilon}
\newtheoremstyle{spaced}
  {6pt}   %
  {0pt}   %
  {\itshape} %
  {}       %
  {\bfseries} %
  {.}      %
  {0.5em}  %
  {}
\theoremstyle{spaced}
\newtheorem{assumption}{Assumption}
\newcommand{\algcommentlight}[1]{\textcolor{blue!70!black}{\transparent{0.5}\small{\texttt{\textbf{//\hspace{2pt}#1}}}}}
\DeclarePairedDelimiter{\abs}{\lvert}{\rvert} %
\DeclarePairedDelimiter{\crl}{\{}{\}}
\DeclarePairedDelimiter{\prn}{(}{)}
\DeclarePairedDelimiterX{\infdiv}[2]{(}{)}{%
  #1\;\delimsize\|\;#2%
}
\newcommand{\wt}[1]{\widetilde{#1}}
\newcommand{\wh}[1]{\widehat{#1}}
\newcommand{\wb}[1]{\widebar{#1}}
\def\ddefloop#1{\ifx\ddefloop#1\else\ddef{#1}\expandafter\ddefloop\fi}
\def\ddef#1{\expandafter\def\csname bb#1\endcsname{\ensuremath{\mathbb{#1}}}}
\def\ddefloop#1{\ifx\ddefloop#1\else\ddef{#1}\expandafter\ddefloop\fi}
\def\ddef#1{\expandafter\def\csname b#1\endcsname{\ensuremath{\mathbf{#1}}}}
\def\ddef#1{\expandafter\def\csname sf#1\endcsname{\ensuremath{\mathsf{#1}}}}
\def\ddef#1{\expandafter\def\csname c#1\endcsname{\ensuremath{\mathcal{#1}}}}
\def\ddef#1{\expandafter\def\csname h#1\endcsname{\ensuremath{\widehat{#1}}}}
\def\ddef#1{\expandafter\def\csname hc#1\endcsname{\ensuremath{\widehat{\mathcal{#1}}}}}
\def\ddef#1{\expandafter\def\csname t#1\endcsname{\ensuremath{\widetilde{#1}}}}
\def\ddef#1{\expandafter\def\csname tc#1\endcsname{\ensuremath{\widetilde{\mathcal{#1}}}}}
\def\ddefloop#1{\ifx\ddefloop#1\else\ddef{#1}\expandafter\ddefloop\fi}
\def\ddef#1{\expandafter\def\csname scr#1\endcsname{\ensuremath{\mathscr{#1}}}}
\let\oldparagraph\paragraph
\renewcommand{\paragraph}[1]{\oldparagraph{#1}}
\renewcommand{\epsilon}{\varepsilon}
\newcommand{\ldef}{\vcentcolon=}
\renewcommand{\bigm}[1]{%
  \ifcsname fenced@\string#1\endcsname
    \expandafter\@firstoftwo
  \else
    \expandafter\@secondoftwo
  \fi
  {\expandafter\amsmath@bigm\csname fenced@\string#1\endcsname}%
  {\amsmath@bigm#1}%
}
\newcommand{\DeclareFence}[2]{\@namedef{fenced@\string#1}{#2}}
\let\save@mathaccent\mathaccent
\newcommand*\if@single[3]{%
  \setbox0\hbox{${\mathaccent"0362{#1}}^H$}%
  \setbox2\hbox{${\mathaccent"0362{\kern0pt#1}}^H$}%
  \ifdim\ht0=\ht2 #3\else #2\fi
  }
\newcommand*\rel@kern[1]{\kern#1\dimexpr\macc@kerna}
\newcommand*\widebar[1]{\@ifnextchar^{{\wide@bar{#1}{0}}}{\wide@bar{#1}{1}}}
\newcommand*\wide@bar[2]{\if@single{#1}{\wide@bar@{#1}{#2}{1}}{\wide@bar@{#1}{#2}{2}}}
\newcommand*\wide@bar@[3]{%
  \begingroup
  \def\mathaccent##1##2{%
    \let\mathaccent\save@mathaccent
    \if#32 \let\macc@nucleus\first@char \fi
    \setbox\z@\hbox{$\macc@style{\macc@nucleus}_{}$}%
    \setbox\tw@\hbox{$\macc@style{\macc@nucleus}{}_{}$}%
    \dimen@\wd\tw@
    \advance\dimen@-\wd\z@
    \divide\dimen@ 3
    \@tempdima\wd\tw@
    \advance\@tempdima-\scriptspace
    \divide\@tempdima 10
    \advance\dimen@-\@tempdima
    \ifdim\dimen@>\z@ \dimen@0pt\fi
    \rel@kern{0.6}\kern-\dimen@
    \if#31
      \overline{\rel@kern{-0.6}\kern\dimen@\macc@nucleus\rel@kern{0.4}\kern\dimen@}%
      \advance\dimen@0.4\dimexpr\macc@kerna
      \let\final@kern#2%
      \ifdim\dimen@<\z@ \let\final@kern1\fi
      \if\final@kern1 \kern-\dimen@\fi
    \else
      \overline{\rel@kern{-0.6}\kern\dimen@#1}%
    \fi
  }%
  \macc@depth\@ne
  \let\math@bgroup\@empty \let\math@egroup\macc@set@skewchar
  \mathsurround\z@ \frozen@everymath{\mathgroup\macc@group\relax}%
  \macc@set@skewchar\relax
  \let\mathaccentV\macc@nested@a
  \if#31
    \macc@nested@a\relax111{#1}%
  \else
    \def\gobble@till@marker##1\endmarker{}%
    \futurelet\first@char\gobble@till@marker#1\endmarker
    \ifcat\noexpand\first@char A\else
      \def\first@char{}%
    \fi
    \macc@nested@a\relax111{\first@char}%
  \fi
  \endgroup
}
\newcommand{\oursmath}{\mathsf{ours}}
\newcommand{\unifmath}{\mathsf{unif}}
\newcommand{\etcmath}{\mathsf{ETC}}
\newcommand{\metric}{\mathsf{Metric}}
\newcommand{\precisionmath}{\mathsf{Accuracy}}
\newcommand{\precisiontext}{\text{accuracy}\xspace}
\newcommand{\coveragetext}{\text{coverage}\xspace}
\newcommand{\coveragemath}{\mathsf{Coverage}}
\newcommand{\uniftext}{\textsc{Uniform}\xspace}
\newcommand{\elimtext}{\textsc{Elimination}\xspace}
\newcommand{\ucbtext}{\textsc{UCB}\xspace}
\newcommand{\gaptext}{\textsc{Gap}\xspace}
\newcommand{\entropytext}{\textsc{Entropy}\xspace}
\title{Strategic Scaling of Test-Time Compute:\\ A Bandit Learning Approach}
\date{}
\author{
Bowen Zuo\\
{\normalsize University of California, Riverside}\\
{\normalsize\texttt{bzuo002@ucr.edu}}
\and
Yinglun Zhu\textsuperscript{\dag}\\
{\normalsize University of California, Riverside}\\
{\normalsize\texttt{yzhu@ucr.edu}}
}
\begin{document}

\maketitle
\begingroup
\renewcommand\thefootnote{}\footnotetext{\textsuperscript{\dag}Project lead and corresponding author.}
\endgroup

\begin{abstract}
  Scaling test-time compute has emerged as an effective strategy for improving the performance of large language models. However, existing methods typically allocate compute uniformly across all queries, overlooking variation in query difficulty. To address this inefficiency, we formulate test-time compute allocation as a novel bandit learning problem and propose adaptive algorithms that estimate query difficulty on the fly and allocate compute accordingly. Compared to uniform allocation, our algorithms allocate more compute to challenging queries while maintaining accuracy on easier ones. Among challenging queries, our algorithms further learn to prioritize solvable instances, effectively reducing excessive computing on unsolvable queries. We theoretically prove that our algorithms achieve better compute efficiency than uniform allocation and empirically validate their effectiveness on math and code benchmarks. Specifically, our algorithms achieve up to an 11.10\% performance improvement (15.04\% relative) on the MATH-500 dataset, up to 10.82\% (14.44\% relative) on the AIME25 dataset, and up to an 11.23\% performance improvement (15.29\% relative) on the LiveCodeBench dataset.

\end{abstract}

\section{Introduction}
\label{sec:intro}

Recent advances in large language models (LLMs) have shifted attention from training-time compute \citep{kaplan2020scalinglawsneurallanguage, hoffmann2022training, chowdhery2022palmscalinglanguagemodeling} to test-time compute \citep{wei2023chainofthoughtpromptingelicitsreasoning, yao2023treethoughtsdeliberateproblem, madaan2023self, agarwal2024manyshotincontextlearning, muennighoff2025s1simpletesttimescaling} as a means of improving model performance.  
Test-time scaling methods such as Best-of-$N$ sampling \citep{brown2024largelanguagemonkeysscaling, snell2024scalingllmtesttimecompute} and consistency checking \citep{wang2022self} enhance output quality by generating multiple responses and selecting the most promising one.  
This selection process can be strengthened using high-quality reward oracles \citep{cobbe2021trainingverifierssolvemath, uesato2022solving, lightman2023letsverifystepstep, zhang2025lessons}.  
These methods have achieved strong empirical gains without additional model training.  
For instance, as noted in OpenAI’s o1 release report \citep{openai2024learning}, repeated sampling with 64 generations improves accuracy on the 2024 AIME competition math dataset from 74.4\% to 83.3\%---a nearly 9\% gain without any model updates.
\looseness=-1

Despite recent advances, most test-time scaling techniques still allocate compute \emph{uniformly across all queries} \citep{brown2024largelanguagemonkeysscaling, snell2024scalingllmtesttimecompute}, ignoring the inherent variability in query difficulty.  
This one-size-fits-all strategy is inefficient: simple arithmetic questions receive the same compute as multi-step reasoning tasks, leading to wasted resources on easy queries and insufficient budget on hard ones.  
Ideally, one should allocate \emph{just enough compute} to confidently solve easy queries and \emph{reallocate the remaining budget} to harder ones.  
While recent work has begun exploring adaptive test-time strategies, existing methods either (1) focus on compute allocation \emph{within a single query} \citep{sun2024fast, manvi2024adaptiveinferencetimecomputellms, tan2025adaptiverectificationsamplingtesttime}, or (2) rely on \emph{two-stage procedures} \citep{damani2024learning, wang2025makepennycountdifficultyadaptive} that require training an auxiliary model (or pre-compute allocation) in the first stage to guide later allocation decisions.

In this work, we introduce a new perspective---\emph{strategic scaling of test-time compute}---in which compute is adaptively allocated \emph{across} a set of queries based on their estimated difficulty.  
We formulate test time compute allocation as a \emph{fully adaptive} pure-exploration-style bandit learning problem \citep{bubeck2009pure, jamieson2014best, locatelli2016optimalalgorithmthresholdingbandit, zhu2020robust}, treating each query as an action and allocating compute sequentially to \emph{maximize the number of queries answered correctly within a fixed budget}.  
Our adaptive algorithms estimate query difficulty on the fly and prioritize compute for those most likely to benefit from additional inference.  
Empirically, our method achieves up to 11.10\% absolute (15.04\% relative) improvement on the MATH-500 dataset \citep{lightman2023letsverifystepstep, hendrycksmath2021}, 10.82\% absolute (14.44\% relative) on AIME25 \citep{aime2025}, and 11.23\% absolute (15.29\% relative) on LiveCodeBench \citep{jain2024livecodebenchholisticcontaminationfree}—all under the same compute budget as baselines.  
\cref{fig:illustration} provides a high-level comparison between our method and various baselines.

\begin{figure}[t]
  \centering
  \includegraphics[width=.25\linewidth]{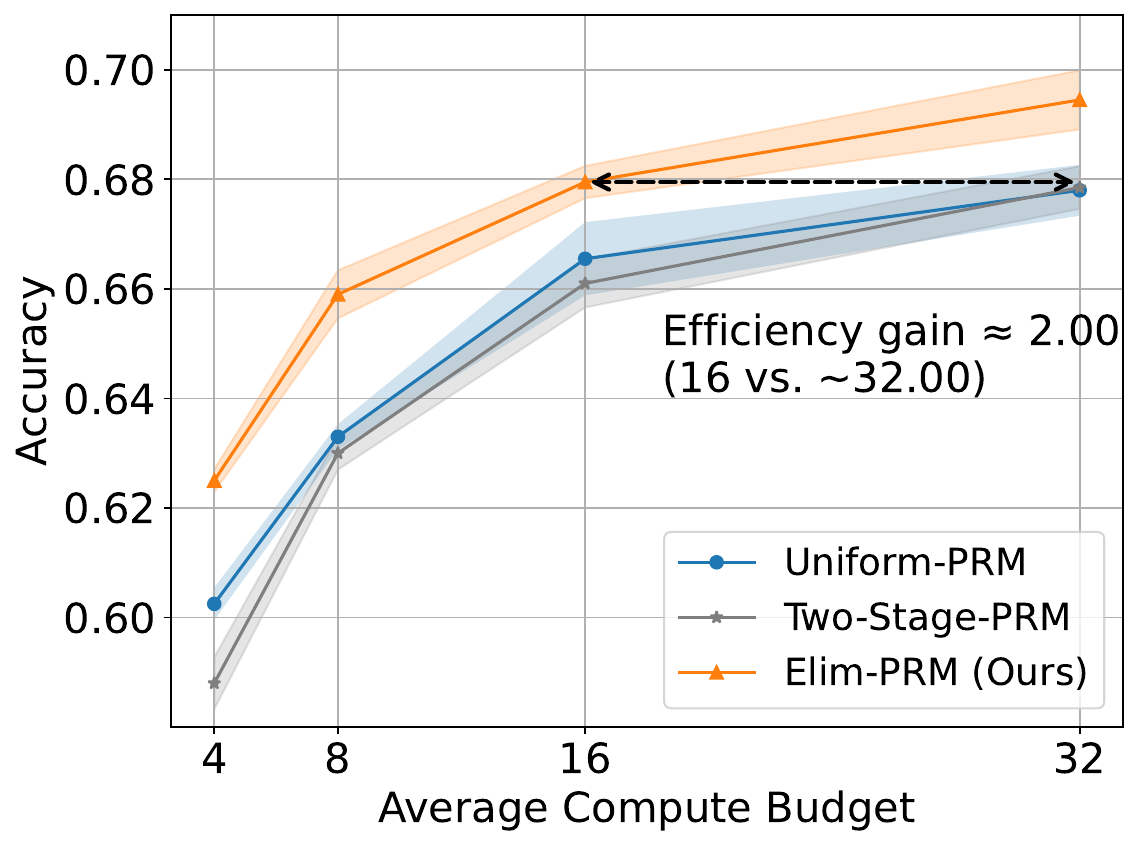}\hfill
   \includegraphics[width=.25\linewidth]{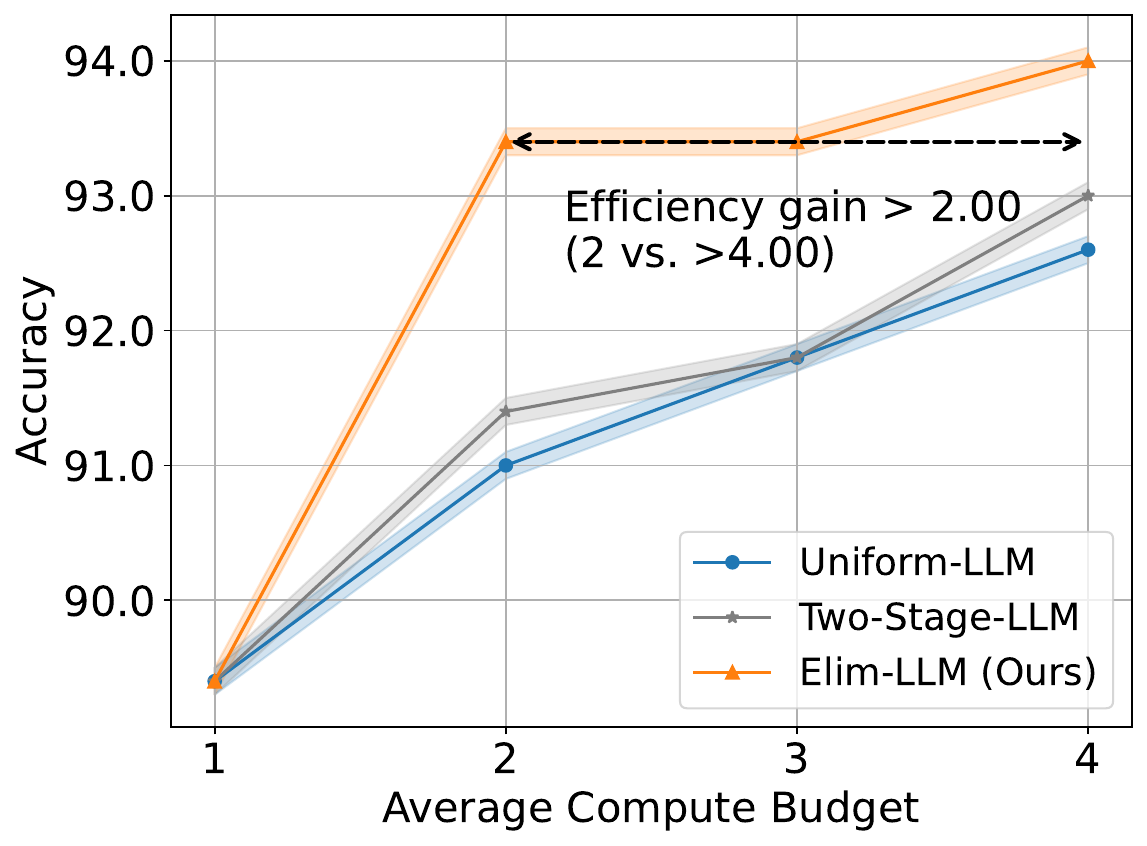}\hfill
  \includegraphics[width=.25\linewidth]{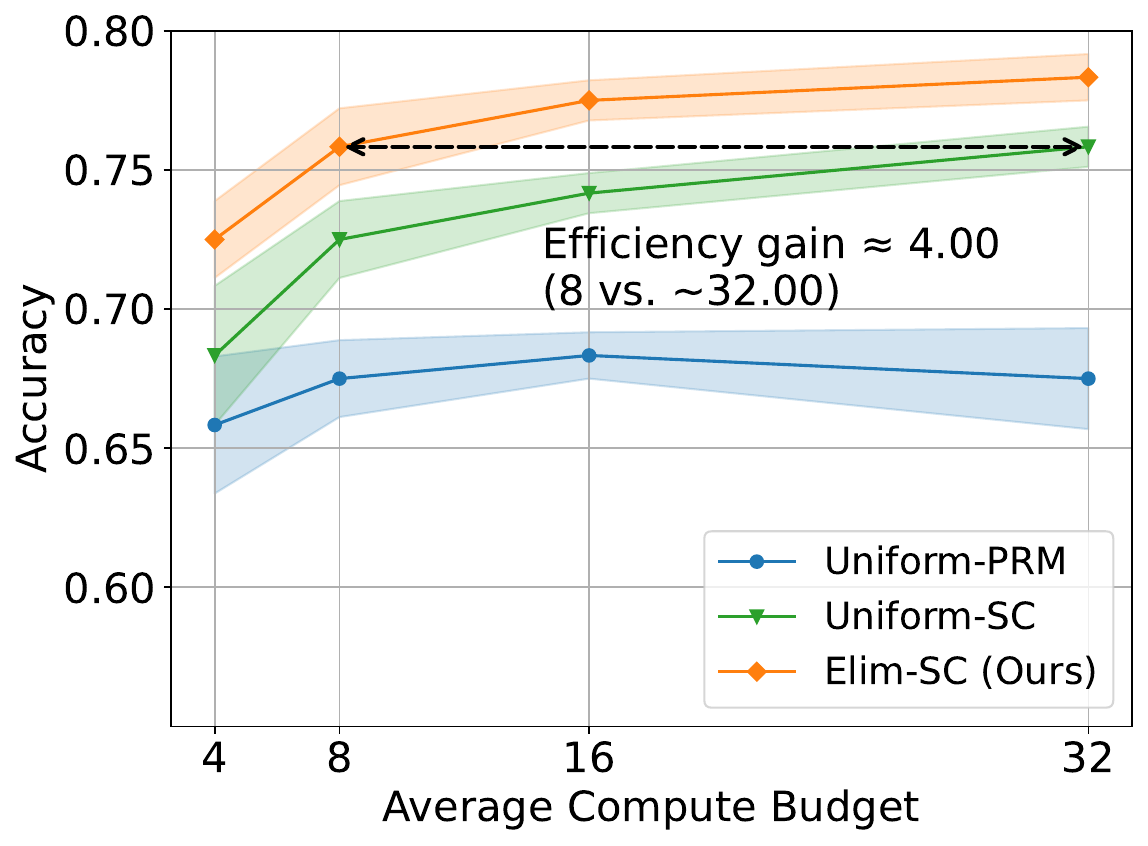}\hfill
  \includegraphics[width=.25\linewidth]{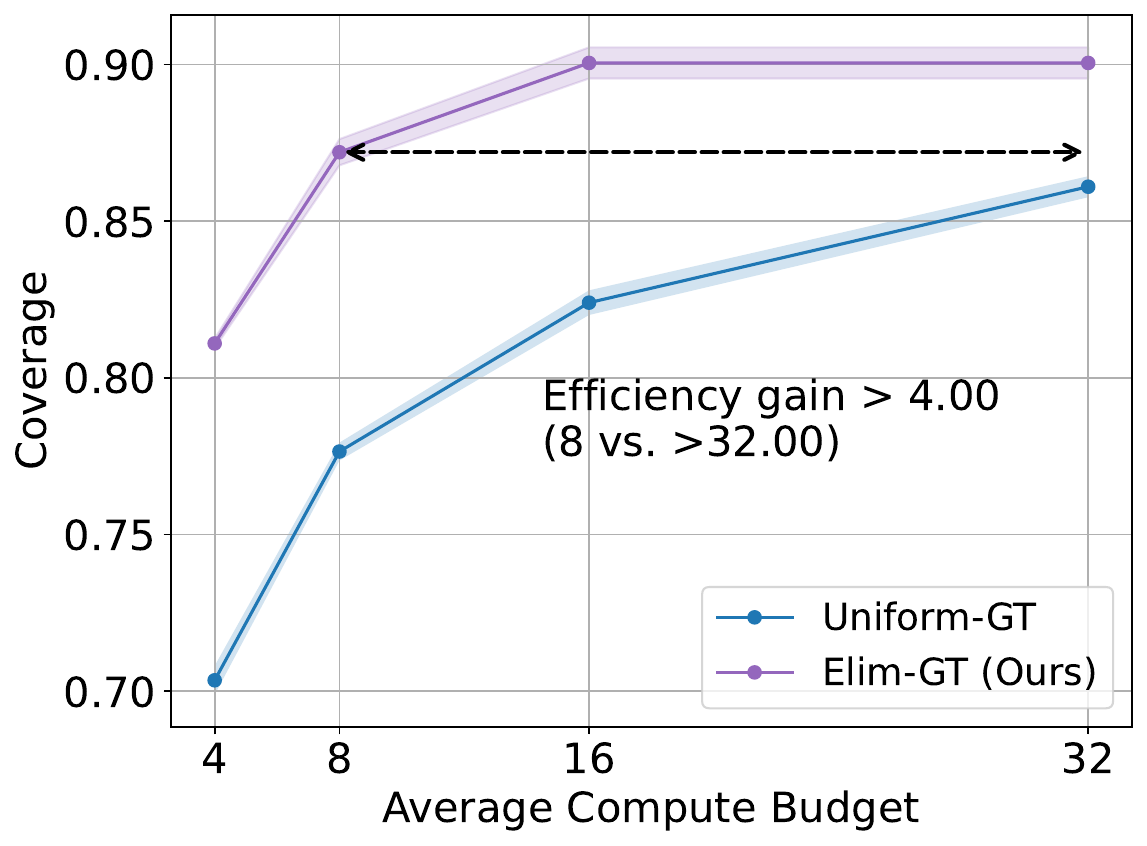}\hfill
  \caption{Comparison between our algorithm and baselines. 
  \emph{First:} Accuracy comparison on MATH-500 with \texttt{Llama-3.1-8B-Instruct}.  
  \emph{Second:} Accuracy comparison on MATH-500 with \texttt{Gemini-2.5-flash-lite}.
  \emph{Third:} Accuracy comparison on AIME25 with \texttt{Qwen3-4B}.
  \emph{Fourth:} Coverage comparison on LiveCodeBench with \texttt{DeepSeek-R1-Distill-Llama-8B}.  
  }
  \label{fig:illustration}
\end{figure}

\paragraph{Contributions.} 
We summarize our main contributions below:
\begin{enumerate}
[leftmargin=10pt, itemindent=*]
  \item We formulate LLM test-time compute allocation as a novel bandit learning problem, bridging test-time scaling and bandit learning communities.
  This formulation grounds strategic test-time scaling in a precise decision-theoretic framework.
  \item 
  We propose a general algorithmic framework for strategic compute allocation, supporting flexible exploration strategies—including a novel entropy-based rule.  
Our framework naturally extends to incorporate alternative aggregation methods and handles both streaming and token-constrained settings.  
We further provide theoretical insights into the efficiency gains of our adaptive approach.

    \item We conduct extensive experiments on math and code benchmarks and show that our algorithms consistently outperform baselines. Further analyses demonstrate that our algorithms adaptively allocate compute to harder queries in standard settings, and to solvable queries in scenarios containing both solvable and unsolvable instances, effectively avoiding compute waste.
\end{enumerate}

\textbf{Paper organization.}
 The rest of this paper is organized as follows. 
 We review related work in \cref{sec:related} and 
 introduce the problem of strategic test-time compute allocation in \cref{sec:setting}.
Our solution is presented in \cref{sec:methods}, including the bandit formulation, algorithmic framework, extensions, and theoretical analysis.
Empirical results are in \cref{sec:experiments}, covering main results, analyses, and ablations.
We conclude in \cref{sec:discussion}.
 Complete proofs and additional experimental details are deferred to the Appendix.

\section{Related work}
\label{sec:related}

\paragraph{Test-time compute techniques.}
Scaling test-time compute (TTC) has emerged as a powerful class of methods for improving the performance of large language models, typically without requiring additional parameter updates.  
In-context learning \citep{brown2020languagemodelsfewshotlearners}, including its scaling to many-shot regimes \citep{agarwal2024manyshotincontextlearning, bertsch2024incontextlearninglongcontextmodels}, as well as prompting- or search-based methods such as Chain-of-Thought \citep{wei2023chainofthoughtpromptingelicitsreasoning} and Tree-of-Thought \citep{yao2023treethoughtsdeliberateproblem, feng2023alphazero}, have demonstrated that carefully designed test-time techniques can match or even surpass finetuned models \citep{mosbach2023fewshotfinetuningvsincontext}.  
Self-reflection \citep{madaan2023self} is another popular technique for leveraging TTC to improve performance: by prompting the LLM to iteratively refine its own generations, the model can produce higher-quality responses across a range of tasks \citep{chen2023teaching, gou2023critic}.  
\citet{muennighoff2025s1simpletesttimescaling} further demonstrates that simply increasing the number of generated ``thinking'' tokens leads to substantial performance gains.
\looseness=-1

Repeated sampling methods---most notably Best-of-$N$ \citep{brown2024largelanguagemonkeysscaling, snell2024scalingllmtesttimecompute, wang2022self}---have become popular for scaling test-time compute, especially when combined with high-quality reward models \citep{cobbe2021trainingverifierssolvemath, uesato2022solving, lightman2023letsverifystepstep, zhang2025lessons}.  
Building on this line of work, recent---and in some cases concurrent---efforts have proposed adaptive variants of Best-of-$N$ that dynamically allocate compute for a given query \citep{sun2024fast, manvi2024adaptiveinferencetimecomputellms, tan2025adaptiverectificationsamplingtesttime}.  
However, these methods focus on adaptive allocation \emph{within an individual query}, without considering opportunities to redistribute compute across a set of queries.  
In contrast, we study \emph{strategic compute allocation across multiple queries}, introducing an additional layer of optimization---for example, deciding when to transfer unused budget from easier queries to harder ones.  
The problem setting in \citet{damani2024learning} is closely related, as they also consider multi-query compute allocation.  
However, their approach relies on a two-stage schedule that requires training an additional model in the first stage to guide compute distribution, incurring extra compute cost.  
In contrast, we formulate the problem as a novel bandit learning task and develop \emph{fully adaptive} algorithms that learn to allocate compute \emph{on the fly}, without any additional training overhead.  
Furthermore, we provide the first theoretical result that provably demonstrates the advantage of strategic test-time compute allocation over uniform allocation.

\paragraph{Bandit learning and pure exploration.} 
Bandit learning is a fundamental framework for sequential decision making under uncertainty, where an agent must choose among a set of actions (or arms) to optimize a long-term objective with limited feedback \citep{bubeck2012regretanalysisstochasticnonstochastic, lattimore2020bandit}.
Popular algorithms include Upper Confidence Bound (UCB,  \citet{auer2002finite, audibert2009minimax, chu2011contextual, zhu2020regret, zhu2022pareto, garivier2022kl}), which selects the action with the highest upper confidence bound; Thompson Sampling \citep{thompson1933likelihood, chapelle2011empirical, agrawal2012analysis, russo2018tutorial}, which selects the action with the highest sampled reward from the posterior; and inverse gap weighting strategies \citep{foster2020beyond, foster2021statistical, zhu2022large, zhu2022contextual, rucker2023infinite},
which sample actions with probabilities inversely proportional to their estimated reward gaps.
Bandit algorithms have been widely applied in domains such as online recommendation systems \citep{li2010contextual}, clinical trials \citep{villar2015multi}, hyperparameter tuning \citep{li2018hyperband}, and more recently applications with LLMs \citep{shi2024efficient, chen2024efficient}.

Pure exploration \citep{bubeck2009pure, jamieson2014best}, also known as the best arm identification (BAI) problem, is a key subfield of bandit learning that aims to identify high-performing arms using as few samples as possible.
Core algorithms include successive elimination \citep{even2002pac, even2006action, karnin2013almost}, UCB-based strategies \citep{kalyanakrishnan2012pac, kaufmann2013information, jamieson2014lil}, and gap-based sampling methods \citep{locatelli2016optimalalgorithmthresholdingbandit}.
Recent extensions generalize these techniques to more expressive function classes, including linear models \citep{fiez2019sequential, katz2020empirical, zhu2022near}, kernel functions \citep{du2021collaborative}, and neural networks \citep{zhu2021pure}.
Our work introduces a novel pure-exploration-style bandit formulation, tailored to LLM test-time compute allocation---a setting not previously explored in this context.
We treat each query as a bandit action and adaptively allocate compute to maximize the fraction of queries correctly answered under a fixed compute budget.
This formulation enables the use of classical bandit techniques such as elimination rules, confidence bounds, and gap-based sampling. In addition, we propose a new entropy-based sampling strategy (\cref{sec:extensions}) 
that prioritizes queries with diverse response patterns.
While our formulation is conceptually related to the thresholding bandit problem and its variants
\citep{locatelli2016optimalalgorithmthresholdingbandit, zhu2020regret}, it departs fundamentally in its objective.
Thresholding bandits aim to 
identify actions (queries) whose \emph{expected reward} exceeding a given threshold.
In contrast, our goal is to generate at least one high-quality response for each query, 
regardless of its expected score.

\vspace{-5 pt}
\section{Problem setting}
\label{sec:setting}

Let $p$ denote a language model, which takes a query $x \in \cX$ as input and generates a response $y \sim p(\cdot \mid x)$. 
Recent studies show that scaling up the test-time compute can significantly improve the performance of LLMs across a variety of tasks \citep{snell2024scalingllmtesttimecompute}. 
In this context, we consider the amount of test-time compute as the total number of responses generated by the language model. 
For example, given query $x\in \cX$ and a compute budget of $N$, the model can generate a set of $N$ responses $g(x; N) \ldef \crl{y_1, \cdots, y_N}$, where each response $y_i \sim p(\cdot \mid x)$ is sampled from the conditional distribution $p(\cdot \mid x)$.
A \emph{reward oracle}  $r: \cX \times \cY \rightarrow [0, 1]$ is used to evaluate the quality of each generation; the reward oracle can be instantiated by either a ground truth verifier or a learned reward model \citep{cobbe2021trainingverifierssolvemath, uesato2022solving,lightman2023letsverifystepstep, zhang2025lessons}.
When the evaluation metric requires a single response as the output, test-time compute methods such as the Best-of-$N$ algorithm \citep{brown2024largelanguagemonkeysscaling} use the reward oracle to score each response and return the one with the highest score. 
Specifically, given a set of responses $g(x; N) = \crl{y_1, \cdots, y_N}$ and letting  $r(x, y_i)$ denote the score of response $y_i$, the final output $f (x;N) \ldef f(g(x; N))$ is defined as:
\looseness=-1
 \begin{align*}
     f(x;N) = y_{i^\star}, \quad \text{where} \quad i^\star \ldef \argmax_{i \in [N]} r(x,y_i).
 \end{align*}

 While scaling test-time compute can improve performance, existing methods primarily focus on \emph{uniform allocation of compute budget}.
 Specifically, given a set of queries $S = \crl{x_1, \cdots, x_n}$ and total compute budget $B \ldef n \wb B$, existing approaches assign the same compute budget $\wb B$ to each query $x_i$ and generate the final outputs $\crl{\prn{x_1, f(x_1; \wb B)}, \cdots, \prn{x_n, f(x_n; \wb B)}}$.
 This uniform allocation is inefficient: it ignores differences in query difficulty and \emph{assigns the same compute to both easy and hard queries}. \looseness=-1
 
 \subsection{Strategic test-time compute allocation}
 \label{sec:new_problem}
 To address the limitations of uniform allocation, we study the problem of \emph{strategic test-time compute allocation}---how to \emph{adaptively} allocate a total compute budget across a set of queries to \emph{maximize the fraction of correctly answered queries}.  
Let $B$ denote the total compute budget and $S = \crl{x_1, \cdots, x_n}$ be a set of $n$ queries.
Let $\metric \in [0, 1]$ be an evaluation metric and $c(x_i)$ be the compute allocated to query $x_i$. 
The goal is to maximize the overall performance subject to a budget constraint:
\begin{align}
    \label{eq:allocation}
    \max_{{\crl{c(x_i)}}_{i=1}^n} \, \frac{1}{n} \sum_{i=1}^n \metric \prn[\big]{x_i; c(x_i)} \quad \text{subject to} \quad \sum_{i=1}^n c(x_i) \leq B.
\end{align}

We consider two popular evaluation metrics: \coveragetext and \precisiontext.
Given a compute allocation $c(x_i)$, \emph{coverage} evaluates whether any of the $c(x_i)$ generations in $g(x_i; c(x_i))$ correctly answers the query $x_i$, while \emph{accuracy} evaluates whether the final output $f(x_i; c(x_i))$ is correct.  
These metrics are defined as:
\begin{align*}
  \coveragemath(x_i; c(x_i)) & \ldef \bbI\crl{\text{there exists $y \in g(x_i; c(x_i))$ that correctly answers query $x_i$.}}\\
  \precisionmath(x_i; c(x_i)) & \ldef \bbI\crl{\text{$f(x_i; c(x_i))$ correctly answers query $x_i$}.}
\end{align*}

The key challenge in \cref{eq:allocation} is to adaptively allocate compute budget $c(x_i)$ to each query $x_i$ \emph{under uncertainty}---that is, 
without knowing in advance the difficulty of each query or how much compute is needed to answer it correctly.
To isolate and address this challenge, we adopt the standard Best-of-$N$ approach \citep{brown2024largelanguagemonkeysscaling, snell2024scalingllmtesttimecompute}  
for both compute counting (i.e., measuring the number of generations per query) and final output selection.

\section{Methods}\label{sec:methods}

We present our approaches to solve the strategic test-time compute allocation problem introduced in \cref{sec:new_problem}.
In \cref{sec:bandit}, we first formulate test-time compute allocation as a bandit learning problem.
We then introduce our algorithmic framework in \cref{subsec:algorithm}, followed by extensions in 
\cref{sec:extensions}
and theoretical analysis of compute efficiency in \cref{sec:theoretical_analysis}.

\subsection{Test-time scaling as bandit learning}
\label{sec:bandit}

To address the challenge of strategic compute allocation under uncertainty, we introduce a novel bandit learning formulation tailored to LLM test-time compute objectives.
Following the bandit terminology, we treat each query $x \in \cS$ as an \emph{action}, and interpret sampling action $x$ as allocating one unit of compute to query $x$ to obtain a randomly generated response $y$. 
After taking action $x$, the learner receives 
feedback from a reward oracle in the form of a score $r(x, y)$.

Our objective is to design an adaptive compute allocation algorithm that maximizes the fraction of queries that are correctly answered within a fixed compute budget $B$.
Assuming availability of a sufficiently accurate reward oracle (e.g., ground truth labels), 
we approximate the correctness of a response using a user-specified threshold $\gamma \in [0,1]$: a response $y$ to query $x$ is considered correct if $r(x,y) \geq \gamma$.\footnote{
We assume access to a sufficiently accurate reward oracle in order to focus on the key challenge of adaptive compute allocation. 
This assumption is clearly satisfied in settings with ground truth labels, and is approximately satisfied by recently developed process reward models \citep{zhang2025lessons}. 
}
Formally, the algorithm adaptively distributes the total compute budget $B$ across all queries through an allocation 
$\crl{c(x_i)}_{i=1}^n$, optimizing the following objective: 
\begin{align*}
  \max_{\crl{c(x_i)}_{i=1}^n} \frac{1}{n} \sum_{i=1}^n  \bbI \prn[\Big]{\max_{y \in g(x_i; c(x_i))} r(x_i, y) \geq \gamma }, 
\end{align*}
where $g(x_i; c(x_i))$ denotes the set of $c(x_i)$ responses generated for query $x_i$.\footnote{When the evaluation metric is $\precisionmath$, one must further explicitly select and output the correct response.} 

While our formulation is conceptually related to the bandit pure exploration problem \citep{bubeck2009pure, jamieson2014best} and its thresholding bandit variants \citep{locatelli2016optimalalgorithmthresholdingbandit, zhu2020robust},  
it fundamentally departs from the conventional objectives.
Standard pure exploration settings aim to identify actions (queries) with high \emph{expected} scores, which correspond---in our setting---to identifying a subset of easy queries that can be reliably answered by the LLM.
In contrast, our objective aims at generating at least one high-quality (correct) response for each query, regardless of its expected score.  
To our knowledge, this not only introduces a novel bandit formulation but also opens the door to further exploration of bandit-based LLM test-time compute allocation.

\subsection{Our algorithmic framework}
\label{subsec:algorithm}

Based on the bandit formulation, we next present our algorithmic framework in \cref{alg:methods}.
Given a query set $\cS$, \cref{alg:methods} initializes an \emph{active set} $\cA = \cS$ that contains active queries that have not yet been confidently answered.
For each query $x \in \cS$, it maintains a response set $g(x)$, the best-scoring response $\check{y}(x)$ observed so far, and its corresponding reward score $\check{r}(x)$, as evaluated by the reward oracle $r$.  
\cref{alg:methods} proceeds in rounds, and operates based on two key components: an \emph{exploration rule} and an \emph{elimination rule}:
\begin{itemize}
[leftmargin=10pt, itemindent=*]
    \item \textbf{The exploration rule.}
    At each round, \cref{alg:methods} explores all queries in the active set, i.e., for each active query $x \in \cA$, it generates $K$ new responses $\crl{y_i}_{i=1}^K$ and updates the response set $g(x) \gets g(x) \cup \crl{y_i}_{i=1}^K$. 
    We discuss extensions to this simple exploration rule in \cref{sec:extensions}.
    \item \textbf{The elimination rule.}
    For each explored query $x$, let $y_{i^\star}$ denote the response that achieves the highest score among newly generated responses, i.e., $i^\star = \argmax_{i \in [K]}r(x, y_i)$. If the reward $r(x, y_{i^\star})$ is greater than the previously observed best score $\check r(x)$, then \cref{alg:methods} (1) updates its maintained best-scoring response $\check y(x) = y_{i^\star}$ and the corresponding reward $\check r(x) = r(x, y_{i^\star})$;
    and (2) \emph{eliminates} query $x$ from the active set $\cA$ if the score $r(x, y_{i^\star})$ is also greater or equal to the elimination threshold $\gamma$.
    \looseness=-1
\end{itemize}
\cref{alg:methods} terminates when the compute budget is exhausted (i.e., $B = 0$) or when all queries have been eliminated from the active set (i.e., ${\cA}= \emptyset$).
For each query $x \in \cS$, \cref{alg:methods} outputs its maintained response set $g(x)$ for coverage evaluation, and its best-scoring response $\check y(x)$ for accuracy evaluation.
\looseness=-1

   \begin{algorithm}[t]
\caption{Strategic Test-Time Compute Allocation}
\begin{algorithmic}[1]
\label{alg:methods}
	\renewcommand{\algorithmicrequire}{\textbf{Input:}}	\renewcommand{\algorithmicensure}{\textbf{Output:}}
\REQUIRE Query set $\cS$, total compute budget $B$, reward oracle $r$, per-round per-query compute budget $K$, and elimination threshold $\gamma$.
\STATE 
For each query $x \in \cS$, 
  maintain a response set $g(x)$, the best-scoring response $\check y(x)$, and its associated reward $\check r(x)$.
\STATE Initialize the active set $\cA \gets \cS$ to be the full query set.
  \WHILE{$B > 0$ and $\abs{\cA} > 0$ }
\FOR{$x \in \cA$}  \label{line:elim1}
\STATE 
\label{line:elim2}
Generate $K$ new responses $\crl{y_i}_{i=1}^{K}$. 
  Update $g(x) \gets g(x) \cup \crl{y_i}_{i=1}^{K}$ and $B \gets B - K$.
\hfill \algcommentlight{The exploration rule: allocating compute to all queries in the active set $\cA$. We discuss extensions of the exploration rule in \cref{sec:extensions}.}
  \STATE Get $i^\star \gets \argmax_{i \in [K]} r(x, y_i)$.
\IF {$r(x, y_{i^\star}) > \check r(x)$}
\STATE Update $\check y(x) \gets y_{i^\star}$ and $\check r(x) \gets r(x, y_{i^\star})$.
  \label{line:selection}
\ENDIF
  \IF{$r(x,y_{i^\star}) \geq \gamma$}
\STATE Update $\cA \gets \cA \setminus \crl{x}$. 
  \label{line:elimination}
\hfill \algcommentlight{The elimination rule.}
\ENDIF
\ENDFOR
\ENDWHILE
  \ENSURE For each $x \in \cS$, output its response set $g(x)$ and the best-scoring response $\check y(x)$.
\algcommentlight{Use $g(x)$ for coverage evaluation and $\check y(x)$ for accuracy evaluation.}
\end{algorithmic}
\end{algorithm}

\paragraph{Reward oracles.}
Reward oracles have become a core component in test-time compute techniques, even for the vanilla uniform Best-of-$N$ algorithm \citep{brown2024largelanguagemonkeysscaling, snell2024scalingllmtesttimecompute}.
Common reward oracles include outcome reward models (ORMs, \citet{cobbe2021trainingverifierssolvemath}) and process reward models (PRMs, \citet{uesato2022solving, lightman2023letsverifystepstep, zhang2025lessons}). 
For tasks with easy or automatic verification, such as math and code generation, ground truth (GT) labels can serve as an exact reward oracle.
We emphasize that \cref{alg:methods} uses \emph{the same number of reward oracle calls} as the uniform Best-of-$N$ algorithm, which relies on the reward oracle to select the final output.

\paragraph{Hyperparameters.}
\cref{alg:methods} takes two hyperparameters as input: the per-round per-query compute budget $K$ and a user-specified elimination threshold $\gamma$. 
The hyperparameter per-round per-query compute budget $K$ controls the granularity level of the budget allocation: a smaller value of $K$ leads to more fine-grained budget allocation with an increased number of allocation rounds. 
The elimination hyperparameter $\gamma $ decides when to eliminate a query from the active set $\cA$. The value of $\gamma$ can be determined based on expert knowledge or based on cross-validation on a separate training set. 
These hyperparameters offer additional levels of flexibility for \cref{alg:methods}. 
We conduct ablation studies of these hyperparameters in \cref{subsec:ablation} and  \cref{app:thresholds}.

\subsection{Extensions of \cref{alg:methods}}
\label{sec:extensions}

\paragraph{\cref{alg:methods} with different aggregation strategies.}
While our main discussion centers on Best-of-$N$, the proposed framework is flexible and can accommodate alternative aggregation strategies.
Prior work \citep{wang2025thinkdeepthinkfast} has shown that Self-Consistency (SC) is often more effective for reasoning models—such as \texttt{Qwen3-4B}—due to their tendency to produce logically coherent outputs.
To incorporate SC into \cref{alg:methods}, we make two modifications: (1) the selection rule (line~\ref{line:selection}) now uses SC instead of a reward model (PRM), and (2) the elimination rule (line~\ref{line:elimination}) is updated to eliminate a query once a certain proportion of its collected responses converge to the same answer (e.g., when over 50\% agree).
When using SC, the reliance on PRMs can be eliminated altogether.
Experiments in \cref{sec:mainresults} confirm that our algorithm remains effective when using SC as the aggregation rule.

\paragraph{\cref{alg:methods} with different exploration rules.}
While the base version of \cref{alg:methods} (\elimtext) explores all active queries uniformly at each round, our framework supports more targeted exploration strategies inspired by the pure exploration bandit literature.
For example, Upper Confidence Bound (\ucbtext) prioritizes queries with high empirical reward plus an uncertainty bonus \citep{kalyanakrishnan2012pac, jamieson2014lil}, while gap-based sampling (\gaptext) focuses on queries near the elimination threshold $\gamma$, allocating compute inversely proportional to the estimated reward gap \citep{locatelli2016optimalalgorithmthresholdingbandit}.
We also propose a novel entropy-based rule (\entropytext) that selects queries with more diverse response patterns, as measured by empirical entropy, and encourages exploration of under-sampled queries.
Experiments in \cref{sec:analysis_empirical} show that \entropytext is particularly effective across extremely difficult query sets.
We defer formulations of these strategies to \cref{app:extensions}. For all the exploration rules, to prevent over-allocation of compute on difficult queries, we additionally introduce a per-query cap $\texttt{max\_sample} \in \bbN$, which limits the number of generated responses for any individual query.
\looseness=-1

\paragraph{\cref{alg:methods} with fine-grained token controls.}
The default version of \cref{alg:methods} models compute cost as the number of response generations.
However, it can be easily extended to track and control token-level usage.
At each iteration, the algorithm can record token consumption and stop once the total token budget is reached.
Alternatively, one can impose fine-grained token caps per generation.
We evaluate this variant in \cref{subsec:ablation} and find that \cref{alg:methods} continues to outperform baselines under the same token budget.

\paragraph{\cref{alg:methods} with streaming queries.}
In streaming settings, queries arrive sequentially, i.e., only the current query $x_t$ is accessible at round $t$.
To adapt \cref{alg:methods} to this setting, we modify line~\ref{line:elim1} to focus solely on $x_t$ while keeping the rest of the framework unchanged.
Also, to prevent over-allocation of compute on difficult queries, we additionally introduce a per-query cap $\texttt{max\_sample} \in \bbN$, which limits the number of generated responses for any individual query.
This constraint enforces a local trade-off between exploration and exploitation and promotes balanced compute usage across the query stream.
Ablation results in \cref{subsec:ablation} show that this streaming variant remains competitive with our original method.

\subsection{Theoretical insights on compute efficiency}
\label{sec:theoretical_analysis}

A key strength of \cref{alg:methods} (and its variants in \cref{sec:extensions} with different exploration rules) lies in their ability to adapt compute based on estimated query difficulty: easier queries get fewer samples, while harder ones are given more when needed.

To better understand the advantage of \cref{alg:methods} over uniform compute allocation, we consider the following probabilistic model. 
For each query $x \in \cS$, we model the correctness of the LLM's response in a single, independent generation as a Bernoulli random variable with parameter $\Delta_x \in (0, 1)$. That is, $X \sim \text{Bernoulli}(\Delta_x)$, where $X =1$ if the LLM answers the query correctly and $X=0$ otherwise.

To ensure the reward oracle is compatible with this probability model and the specified threshold $\gamma$, we make the following assumption.

\begin{assumption}
    \label{assumption:reward_model}
    For any query $x \in \cS$ and any randomly generated response $y$. $y$ correctly answers $x$ if and only if the reward oracle $r$ assigns a score $r(x, y) \geq \gamma$.
\end{assumption}

\cref{assumption:reward_model} ensures that the elimination decision is aligned with the reward oracle and the threshold, allowing us to focus on the analysis of the adaptive design in \cref{alg:methods}.
This assumption is satisfied by the ground truth reward oracle, and holds approximately when the reward model is sufficiently accurate---a condition empirically supported by recent advances in high-quality process reward models \citep{zhang2025lessons}.
\looseness=-1

Suppose $K = O(1)$,
we derive the following quantitative comparison between \cref{alg:methods} and uniform compute allocation.

\begin{restatable}{theorem}{thmSampleComplexity}
  \label{thm:sample_complexity}
  Assume \cref{assumption:reward_model}, and fix any $\delta \in (0,1)$. 
  To output correct responses for all queries in $\cS$ with probability at least $1-\delta$, \cref{alg:methods} requires a total budget  
  $B_\oursmath = \wt \Theta(\sum_{x \in \cS} \frac{1}{\Delta_x})$. This matches the information-theoretic lower bound up to logarithmic factors.
 In contrast, a uniform allocation strategy requires budget $B_{\unifmath} = \wt \Omega (\max_{x \in \cS} \frac{\abs{\cS}}{\Delta_x})$ to achieve the same guarantee.
\end{restatable}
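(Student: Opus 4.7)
The plan is to reduce both bounds to per-query Bernoulli-trial arguments. Under \cref{assumption:reward_model}, a response $y$ to query $x$ is correct iff $r(x,y) \geq \gamma$, so each LLM call on query $x$ is an independent Bernoulli$(\Delta_x)$ trial and the elimination event in \cref{alg:methods} is exactly ``first success.'' Hence the total budget \cref{alg:methods} spends on query $x$ is dominated by the waiting time until the first success of a Bernoulli$(\Delta_x)$ process, up to the per-round block size $K = O(1)$.

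For the \cref{alg:methods} upper bound, I would use the tail bound $\P[\text{no success in } m \text{ trials}] = (1-\Delta_x)^m \leq e^{-m\Delta_x}$, so that setting $m_x = \lceil \log(n/\delta)/\Delta_x \rceil$ makes the per-query failure probability at most $\delta/n$. A union bound over $x \in \cS$ then gives that, with probability at least $1-\delta$, every query is eliminated within $\wt O(1/\Delta_x)$ of its own samples, and summing yields $B_\oursmath = \wt{O}\prn*{\sum_{x \in \cS} 1/\Delta_x}$. For the \ucbtext, \gaptext, and \entropytext variants in \cref{sec:extensions}, the elimination criterion is unchanged, and I plan to argue that no not-yet-eliminated query is starved for long: once a query has been sampled $\wt O(1/\Delta_x)$ times its empirical statistic concentrates, after which either (i)~its score becomes the criterion optimizer so it is sampled again within $\wt O(1)$ rounds, or (ii)~it has already triggered elimination. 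This preserves the per-query $\wt O(1/\Delta_x)$ guarantee, so the same summation gives the $B_\oursmath$ bound for each variant.

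Uniform allocation assigns each query exactly $m = B/n$ trials, so for \emph{all} queries to succeed with probability at least $1-\delta$, the easiest query---with success probability $\Delta^\star := \max_{x \in \cS} \Delta_x$---must in particular see at least one success. The failure probability for this single (easiest) query alone is $(1-\Delta^\star)^m$, which must be at most $\delta$, so $m \geq \log(1/\delta)/\Delta^\star$ and hence $B \geq \wt\Omega(|\cS|/\max_{x \in \cS} \Delta_x)$, giving the lower-bound direction. The matching upper bound is witnessed on instances where all $\Delta_x$'s are of the same order as $\Delta^\star$, on which the same single-query Bernoulli concentration (with a union bound over $\cS$) shows uniform allocation achieves the guarantee with budget of this order, establishing $\wt\Theta(|\cS|/\max_{x \in \cS} \Delta_x)$. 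The main obstacle is the ``no starvation'' step in the variant analyses, since the UCB/\gaptext/\entropytext scores are stochastic and can temporarily reroute compute; the plan there is to combine tight concentration on $\wh r(x)$ (and on the empirical entropy, for \entropytext) with the $\lambda N(x)^{-1/2}$ exploration bonus to cap the number of rounds between consecutive samples of any still-active query, and to track the $\log(n/\delta)$ factors hidden inside $\wt{\cdot}$ carefully so the union bound does not blow up.
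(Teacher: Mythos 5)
Your analysis of $B_\oursmath$ for \cref{alg:methods} itself matches the paper's proof: the geometric tail bound $(1-\Delta_x)^{n_x}\le e^{-\Delta_x n_x}$ with $n_x = \wt O(1/\Delta_x)$, a union bound at level $\delta/\abs{\cS}$, and summation over $x\in\cS$. For the variants in \cref{sec:extensions}, however, your ``no starvation'' machinery is both unnecessary and hard to push through as stated (whether a currently suboptimal-scoring active query is revisited within $\wt O(1)$ rounds depends on the other queries' statistics, not just its own concentration). All that is needed is the observation that, on the good event, each query $x$ is eliminated after at most $n_x$ of \emph{its own} generations, so the total consumption is at most $\sum_{x\in\cS} n_x$ \emph{regardless of the order} in which any exploration rule visits active queries; no bound on inter-visit times is required for a total-budget guarantee. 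This is essentially the one-line footnote argument the paper uses.

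The genuine gap is in the uniform-allocation part. You anchor the argument on the \emph{easiest} query ($\Delta^\star=\max_{x\in\cS}\Delta_x$). That yields only the weakest of the per-query lower bounds, and your ``matching upper bound'' is claimed only on special instances where all $\Delta_x$ are of the same order --- which cannot establish a $\wt\Theta$ for a general instance: on a general instance, giving every query $\wt O(1/\max_x\Delta_x)$ samples leaves the harder queries unsolved with constant probability. The correct anchor is the \emph{hardest} query $\wb x = \argmin_{x\in\cS}\Delta_x$: since uniform allocation gives every query the same budget and the guarantee requires \emph{all} queries to be answered, the per-query budget must be $\wt\Theta(1/\Delta_{\wb x})$ --- the upper bound from the same geometric tail bound, and the lower bound because $(1-\Delta_{\wb x})^{1/\Delta_{\wb x}}$ is bounded below by a universal constant when $\Delta_{\wb x}$ is bounded away from $1$ --- giving $B_{\unifmath}=\wt\Theta(\abs{\cS}/\min_{x\in\cS}\Delta_x)$. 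This is what the paper's proof actually does, and it is what makes the separation meaningful (e.g., $\Delta_{x_i}=i/n$ gives $B_{\unifmath}=\wt\Theta(n^2)$ versus $B_\oursmath=\wt O(n)$); under your reading one would have $\abs{\cS}/\max_x\Delta_x\le\sum_x 1/\Delta_x$, i.e., the theorem would assert that uniform allocation needs \emph{less} budget than the adaptive algorithm. The $\max$ in the theorem statement appears to be a typo for $\min$ (the paper's proof explicitly defines $\wb x$ as the minimizer of $\Delta_x$), and your proposal inherited it rather than proving the intended claim.
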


\cref{thm:sample_complexity} highlights the efficiency advantage of \cref{alg:methods} over uniform allocation. In particular, \cref{alg:methods} requires a total budget of order $\wt \Theta(\sum_{x \in \cS} \frac{1}{\Delta_x})$, whereas uniform allocation requires $\wt \Omega (\max_{x \in \cS} \frac{\abs{\cS}}{\Delta_x})$. Thus, by adapting to query difficulty, \cref{alg:methods} can significantly reduce total compute. For example, if $\abs{\cS}=n$ and $\Delta_{x_i}=i/n$, then $B_\oursmath= \wt O(n)$ while $B_{\unifmath} = \wt \Omega (n^2)$, so uniform allocation can be a factor of $n$ less efficient.

One may further consider the classical two-stage explore-then-commit (ETC) strategy,
which allocates a fixed exploration budget to each query before committing additional compute based on the observed outcomes \citep{lattimore2020bandit}.
Despite its partial adaptivity, ETC remains fundamentally limited because its first stage performs uniform allocation.
We next show that this structural constraint already leads to strictly larger budget complexity in heterogeneous regimes.

\begin{restatable}{proposition}{propETC}
\label{prop:ETC}
Assume \cref{assumption:reward_model}, and fix any $\delta \in (0,1)$.
Consider any two-stage explore-then-commit algorithm that allocates
a fixed exploration budget $m$ to every query in the first stage.
To output correct responses for all queries in $\cS$ with probability at least $1-\delta$,
any such algorithm must incur total compute $B_\etcmath = \wt \Omega(\sum_{x \in \cS} \max(m, \frac{1}{\Delta_x}))$.
\end{restatable}

In particular, when $m$ is nontrivial and many queries are easy (i.e., when $\frac{1}{\Delta_x}$ is small),
ETC incurs an unavoidable uniform-exploration overhead of $\Theta(m)$ per easy query,
whereas \cref{alg:methods} only spends $\wt O(1/\Delta_x)$.
This establishes a strict separation between ETC and our fully adaptive allocation.
Consistent with this theory, our empirical results in \cref{sec:experiments} also show that the ETC baseline (denoted as \textsc{Two Stage}) is substantially outperformed by our algorithm.

\section{Experiments}
\label{sec:experiments}
We describe experimental setups in \cref{sec:setup}, present main results in \cref{sec:mainresults}, offer further analysis in \cref{sec:analysis_empirical}, and report ablations in \cref{subsec:ablation}.
Additional experimental details and results are deferred to \cref{app:experiments}.

\subsection{Experimental setup}
\label{sec:setup}
\paragraph{Datasets.}  
We examine the performance of our algorithms on standard math and code benchmarks: {MATH-500} and {AIME25} \citep{lightman2023letsverifystepstep, hendrycksmath2021, aime2025} and {LiveCodeBench} \citep{jain2024livecodebenchholisticcontaminationfree}. 
MATH-500 contains 500 math questions, AIME25 contains 30 difficult math questions, and the LiveCodeBench contains 479 code execution questions that were collected from 5/1/2023 to 12/1/2023. 
From MATH-500, we further construct one challenging subset: {MATH-500-Hard-16}, which contain questions that cannot be correctly answered after allocating 16 units of compute.
Intuitively, this subset consists of the most difficult queries in the MATH-500 dataset.
\looseness=-1

\paragraph{Baselines.} 
We compare our algorithms with the uniform Best-of-$N$ baseline \citep{brown2024largelanguagemonkeysscaling}, referred to as \textsc{Uniform}, and a two-stage baseline, referred to as \textsc{Two Stage} \citep{damani2024learning, wang2025makepennycountdifficultyadaptive}. The \textsc{Two Stage} baseline first uniformly allocates compute to estimate problem difficulty (stage 1) and then allocates the remaining compute proportionally (stage 2); in other words, it operates in an explore-then-commit style. 
For the \textsc{Two Stage} algorithm, we vary the stage 1 compute ratio from $\crl{25\%, 50\%, 75\%}$ and report the best results.
We report the performance of our \cref{alg:methods} (\elimtext) and its variants introduced in \cref{sec:extensions}.

\paragraph{Models and metrics.}
We conduct experiments with commonly used LLMs of various sizes, including \texttt{Llama-3.2-1B-Instruct} and \texttt{Llama-3.1-8B-Instruct} \citep{grattafiori2024llama}, as well as more recently developed reasoning models \texttt{DeepSeek-R1-Distill-Llama-8B} \citep{deepseekai2025deepseekr1incentivizingreasoningcapability}, \texttt{Qwen3-4B} \citep{yang2025qwen3technicalreport}, and  \texttt{Gemini-2.5-Flash-Lite} \citep{comanici2025gemini}.
For MATH-500, we use different reward oracles depending on the base model family: for Llama models, we use the PRM \texttt{Qwen2.5-Math-PRM-7B} \citep{zhang2025lessons}; for \texttt{Gemini-2.5-Flash-Lite}, we use an LLM-as-a-judge oracle based on \texttt{Gemini-3.0-Flash}. In addition, we also report results under the ground-truth (GT) oracle when available.\footnote{When using the GT reward oracle, accuracy and coverage are equivalent. In this case, we only report coverage.}
For AIME25, we use the Self-Consistency (SC) variants of baselines and our methods, as recent work shows that SC is more effective for reasoning models \citep{wang2025thinkdeepthinkfast}.
For LiveCodeBench, since correctness can be deterministically verified by code execution, we use the GT reward oracle and report only the coverage metric (equivalent to accuracy).
We conduct experiments under average compute budgets of $\{4, 8, 16, 32\}$ and report results averaged over $4$ random runs, with shaded regions in plots representing $\pm 0.5$ standard deviations.

\subsection{Main results}\label{sec:mainresults}

\begin{figure}[t]
  \centering
  \begin{subfigure}[t]{.24\linewidth}
    \centering
    \includegraphics[width=\linewidth]{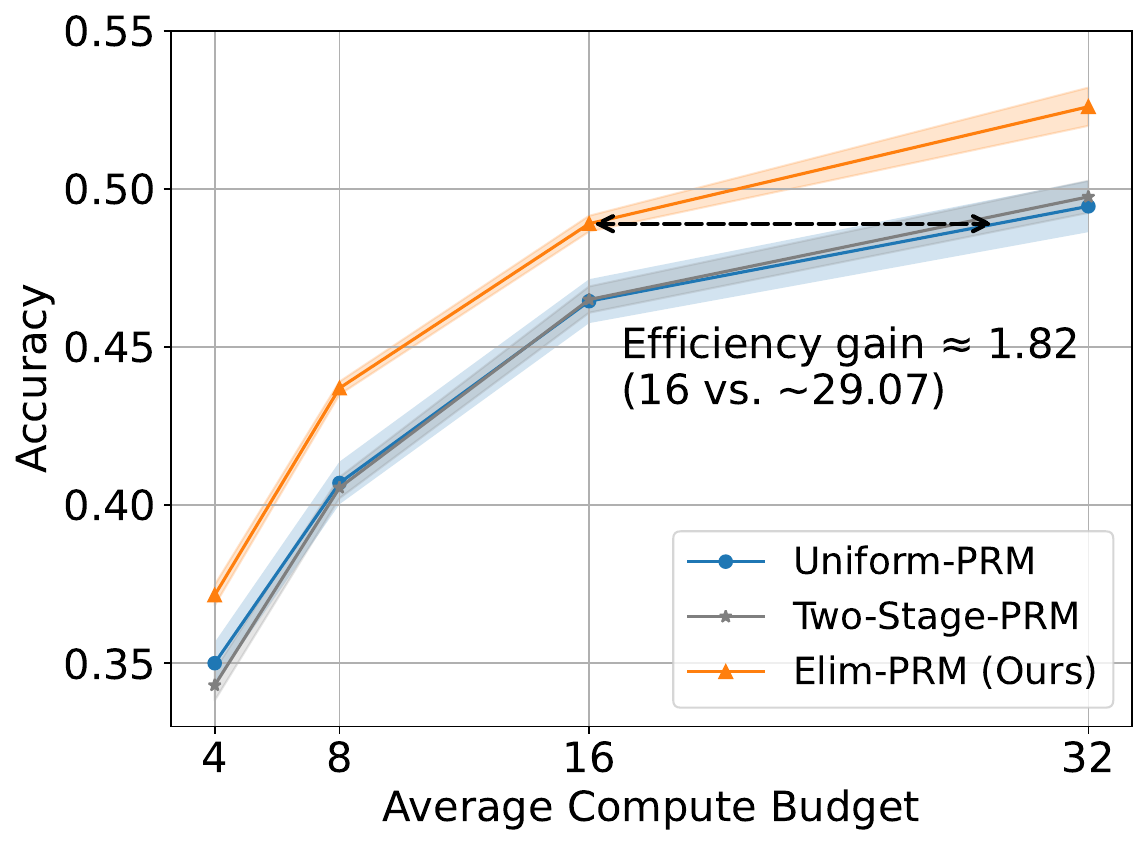}
    \caption{\texttt{Llama-3.2-1B}}
    \label{fig:acc32}
  \end{subfigure}\hfill
    \begin{subfigure}[t]{.24\linewidth}
    \centering
    \includegraphics[width=\linewidth]{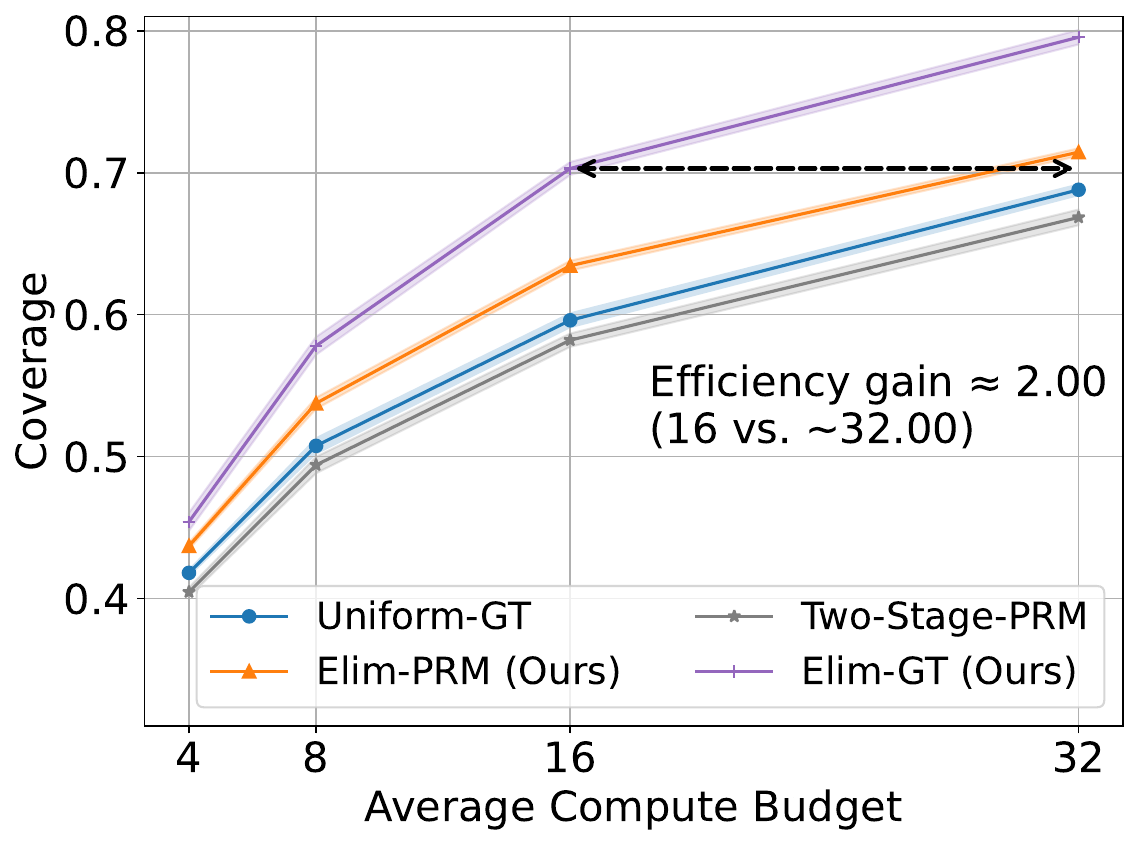}
    \caption{ \texttt{Llama-3.2-1B}}
    \label{fig:cov32}
  \end{subfigure}\hfill
  \begin{subfigure}[t]{.24\linewidth}
    \centering
    \includegraphics[width=\linewidth]{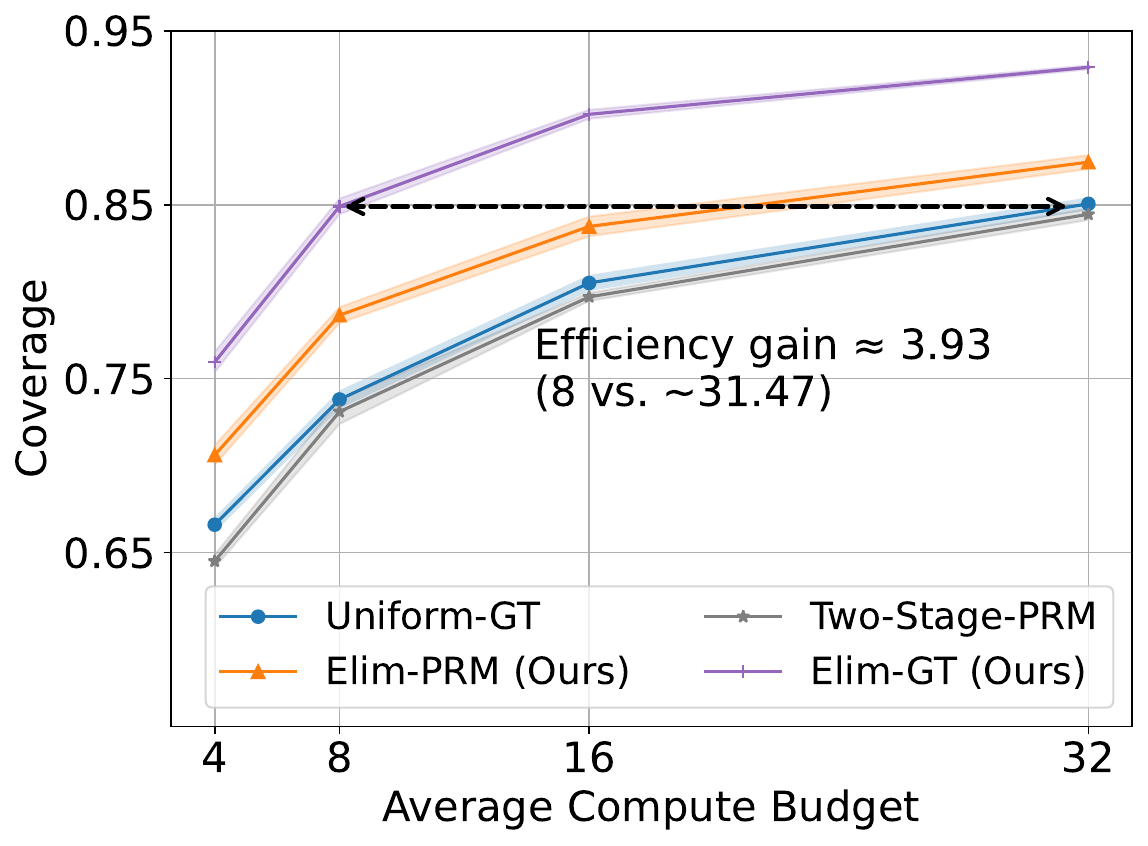}
    \caption{ \texttt{Llama-3.1-8B}}
    \label{fig:acc31}
  \end{subfigure}
   \begin{subfigure}[t]{.24\linewidth}
    \centering
    \includegraphics[width=\linewidth]{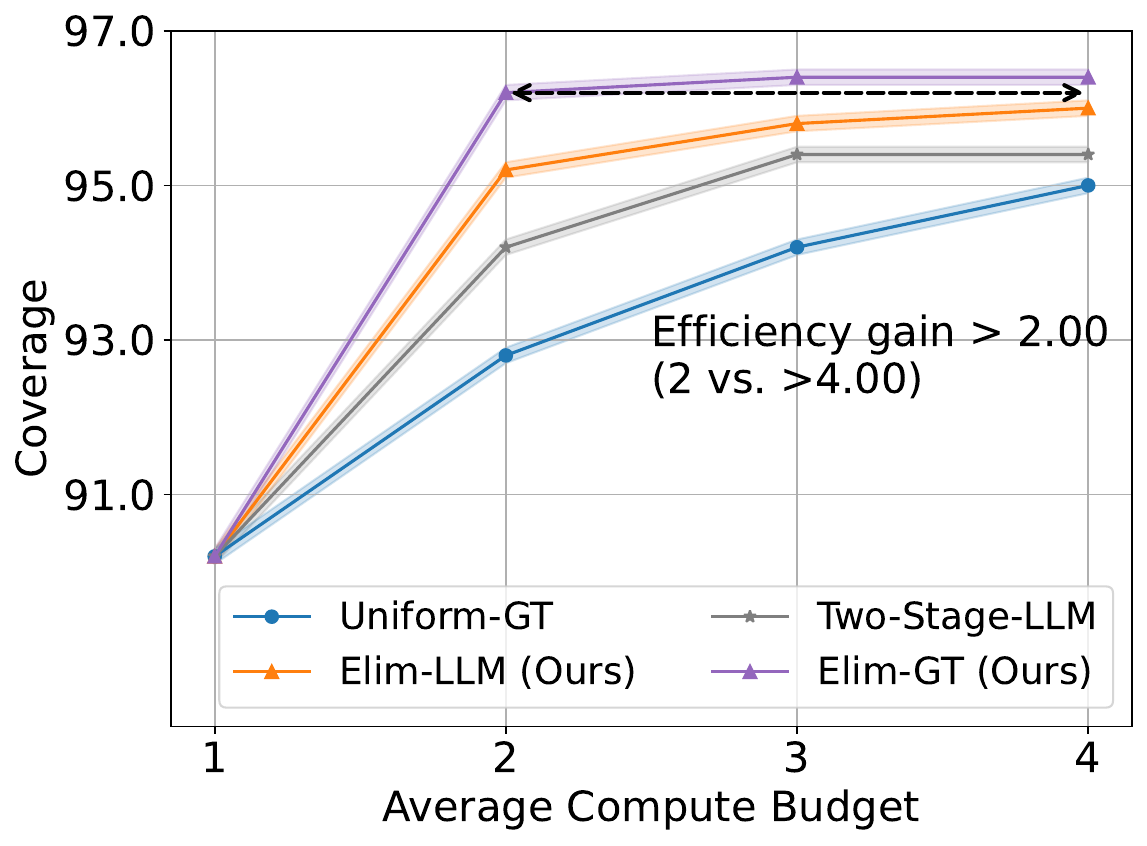}
    \caption{ \texttt{Gemini-2.5-flash-lite}}
    \label{fig:cov_gemini}
  \end{subfigure}

  \caption{Accuracy and coverage comparisons on \textsc{MATH-500} across models of different sizes. Accuracy results of \texttt{Llama-3.1-8B-Instruct} and  \texttt{Gemini-2.5-flash-lite} are presented in \cref{fig:illustration}.}
  \label{fig:acc_cov_grid}
\end{figure}

\textbf{MATH-500 results.} 
\cref{fig:acc_cov_grid} presents experimental results on the MATH-500 dataset across two LLMs and two evaluation metrics (the accuracy result of \texttt{Llama-3.1-8B-Instruct} on MATH-500 is presented in the first plot of \cref{fig:illustration}).
Across all configurations, all variants of our \cref{alg:methods} consistently outperform both baselines.
Under the accuracy metric, when the average compute budget is 16, our method achieves a 2.50\% absolute improvement (7.37\% relative) on \texttt{Llama-3.2-1B-Instruct}; this corresponds to a $1.82\times$ efficiency gain as shown on plot (a): \uniftext takes $1.82\times$ compute to achieve the same performance.  
For \texttt{Llama-3.1-8B-Instruct}, we observe a 1.40\% absolute improvement (4.11\% relative), with a $2\times$ efficiency gain (\cref{fig:illustration}, first plot).
For the coverage metric, when the average compute budget is 16, our method yields a 10.70\% absolute improvement (17.95\% relative) on \texttt{Llama-3.2-1B-Instruct}, resulting in a $2\times$ efficiency gain (\cref{fig:acc_cov_grid}, plot (b)).\footnote{Under the GT reward oracle, we report only the performance of \textsc{Elimination}, as other variants yield similar results. See \cref{app:additional_experiments} for full comparisons.}  
When the average budget is 8, we observe an 11.10\% absolute gain (15.04\% relative) on \texttt{Llama-3.1-8B-Instruct}, yielding a $3.93\times$ efficiency gain (\cref{fig:acc_cov_grid}, plot (c)). 

We further evaluate our method on the frontier model \texttt{Gemini-2.5-Flash-Lite} with reasoning enabled. Given its strong base performance, we use smaller average compute budgets of $\{1,2,3,4\}$.
Because existing PRMs are not reliable at this performance level, we instead use \texttt{Gemini-3.0-Flash} as an LLM-as-a-judge oracle for \texttt{Gemini-2.5-Flash-Lite}.
Under the accuracy metric, our method improves performance by $3.4\%$ absolute ($3.66\%$ relative), corresponding to a $>2.00\times$ efficiency gain (\cref{fig:illustration}, second plot). Under the coverage metric, we obtain a $2.4\%$ absolute improvement ($2.64\%$ relative), again exceeding a $2\times$ efficiency gain (\cref{fig:acc_cov_grid}, plot (d)).
\looseness=-1

\textbf{AIME25 results.}
\cref{fig:reasoning_acc_cov_grid} reports results on the AIME25 dataset with \texttt{Qwen3-4B} (accuracy result of \texttt{Qwen3-4B} on AIME25 is presented at the third plot of \cref{fig:illustration}).
Following \citet{yang2025qwen3technicalreport}, we enable the reasoning ability of \texttt{Qwen3-4B} and set the max token length to 38,912.
Under the accuracy metric (\cref{fig:illustration}, third plot), we are able to gain an absolute of 3.33\% performance gain (4.60\% relative) when the average compute budget is 8 on \texttt{Qwen3-4B}, yielding a $4.00\times$ efficiency gain. 
For the coverage metric (\cref{fig:reasoning_acc_cov_grid}, left plot), when the average compute budget is 16, we observe a 10.82\% absolute gain (14.44\% relative) on \texttt{Qwen3-4B}, yielding a $4.00\times$ efficiency gain.

\textbf{LiveCodeBench results.}
\cref{fig:reasoning_acc_cov_grid} (middle and right) presents results on the LiveCodeBench with \texttt{Llama} models of different sizes, and the fourth plot of \cref{fig:illustration} presents results of \texttt{DeepSeek-R1-Distill-Llama-8B}.
As described in \cref{sec:setup}, we use the GT reward oracle and report coverage, which is equivalent to accuracy in this setting.  
We report results for the \elimtext variant only, as \ucbtext and \gaptext behave identically to \elimtext under the GT oracle.
Across all compute budgets, our method consistently outperforms uniform allocation. With an average compute budget of 16, \texttt{Llama-3.2-1B-Instruct} achieves a 6.47\% absolute improvement (11.63\% relative), corresponding to a $1.98\times$ efficiency gain (\cref{fig:reasoning_acc_cov_grid}, middle plot).  
With an average compute budget of 8, \texttt{Llama-3.1-8B-Instruct} achieves a 7.41\% absolute improvement (14.40\% relative), corresponding to a $3.11\times$ efficiency gain (\cref{fig:reasoning_acc_cov_grid}, right plot).
With an average compute budget of 8, \texttt{DeepSeek-R1-Distill-Llama-8B} achieves a 9.97\% absolute improvement (12.30\% relative), corresponding to a $4.00\times$ efficiency gain (\cref{fig:illustration}, fourth plot).

\begin{figure}[t]
  \centering
  \begin{subfigure}[t]{.32\linewidth}
    \centering
    \includegraphics[width=\linewidth]{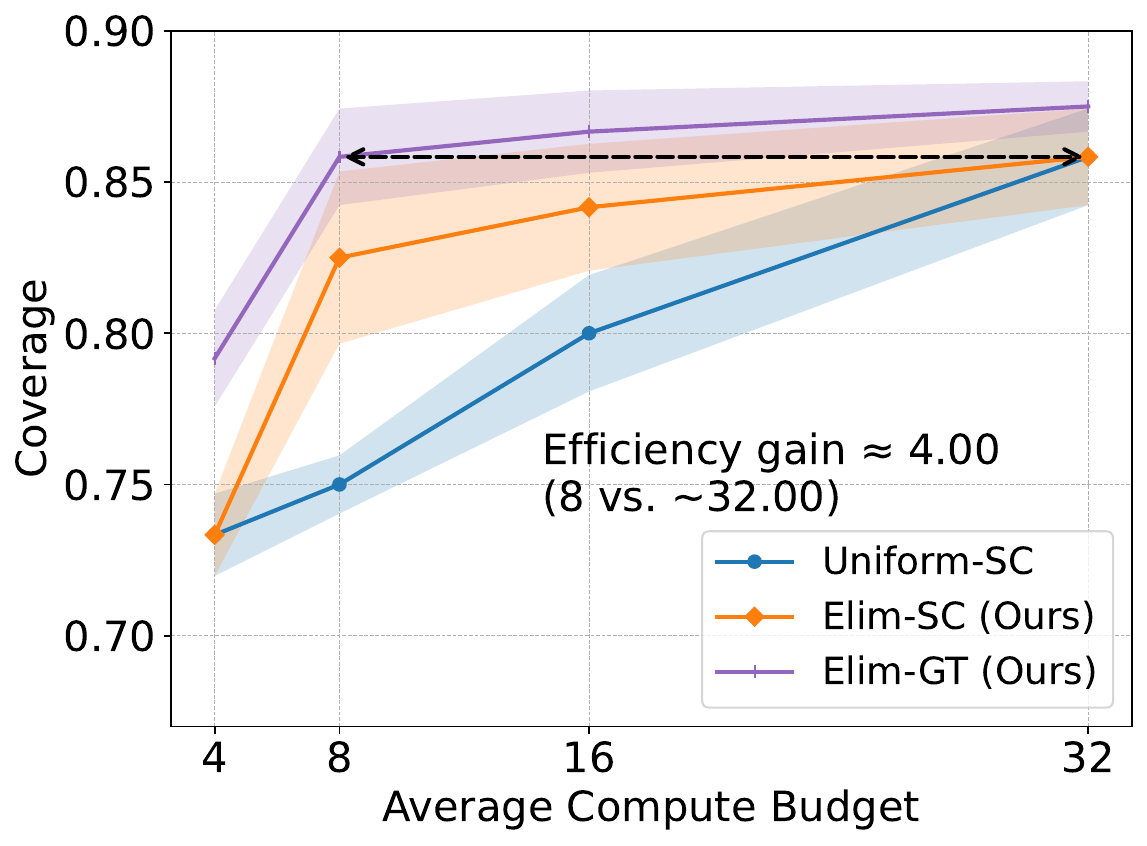}
    \caption{\texttt{Qwen3-4B}}
    \label{fig:acc17}
  \end{subfigure}\hfill
  \begin{subfigure}[t]{.32\linewidth}
    \centering
    \includegraphics[width=\linewidth]{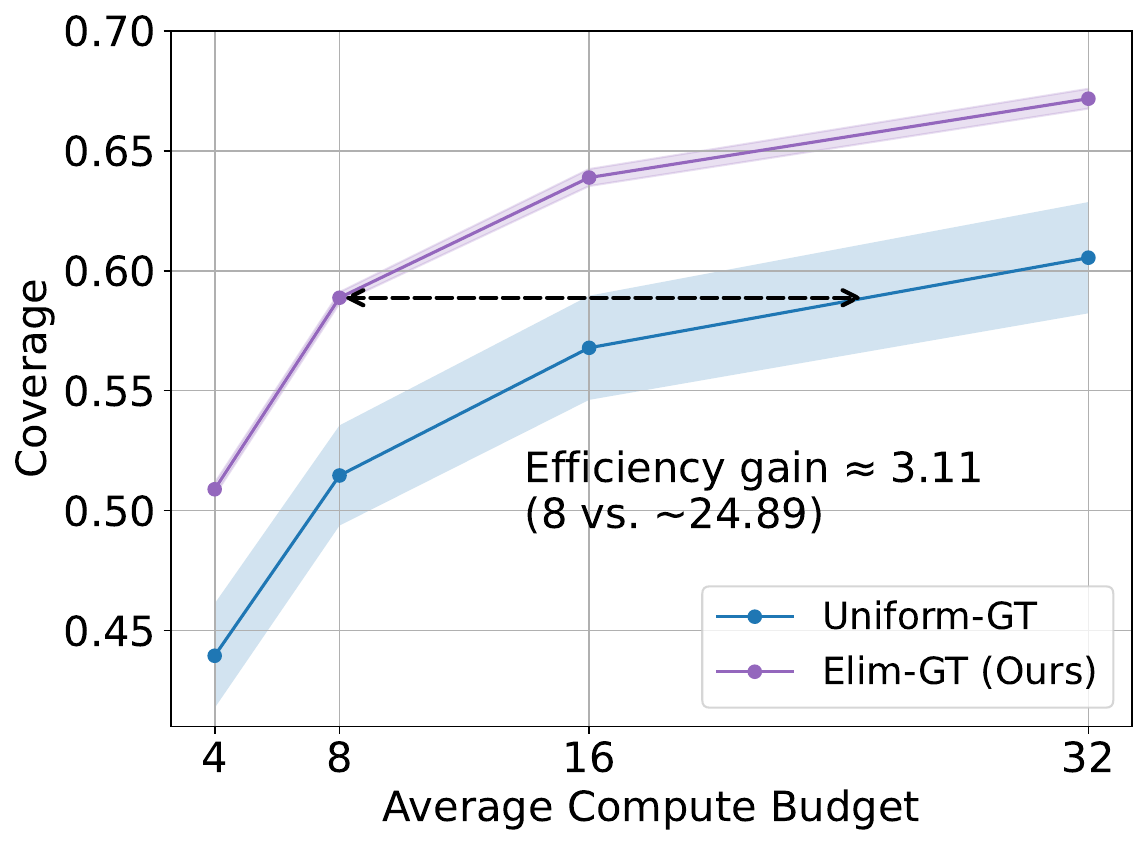}
    \caption{\texttt{Llama-3.2-1B-Instruct}}
    \label{fig:acc4B}
  \end{subfigure}\hfill
  \begin{subfigure}[t]{.32\linewidth}
    \centering
\includegraphics[width=\linewidth]{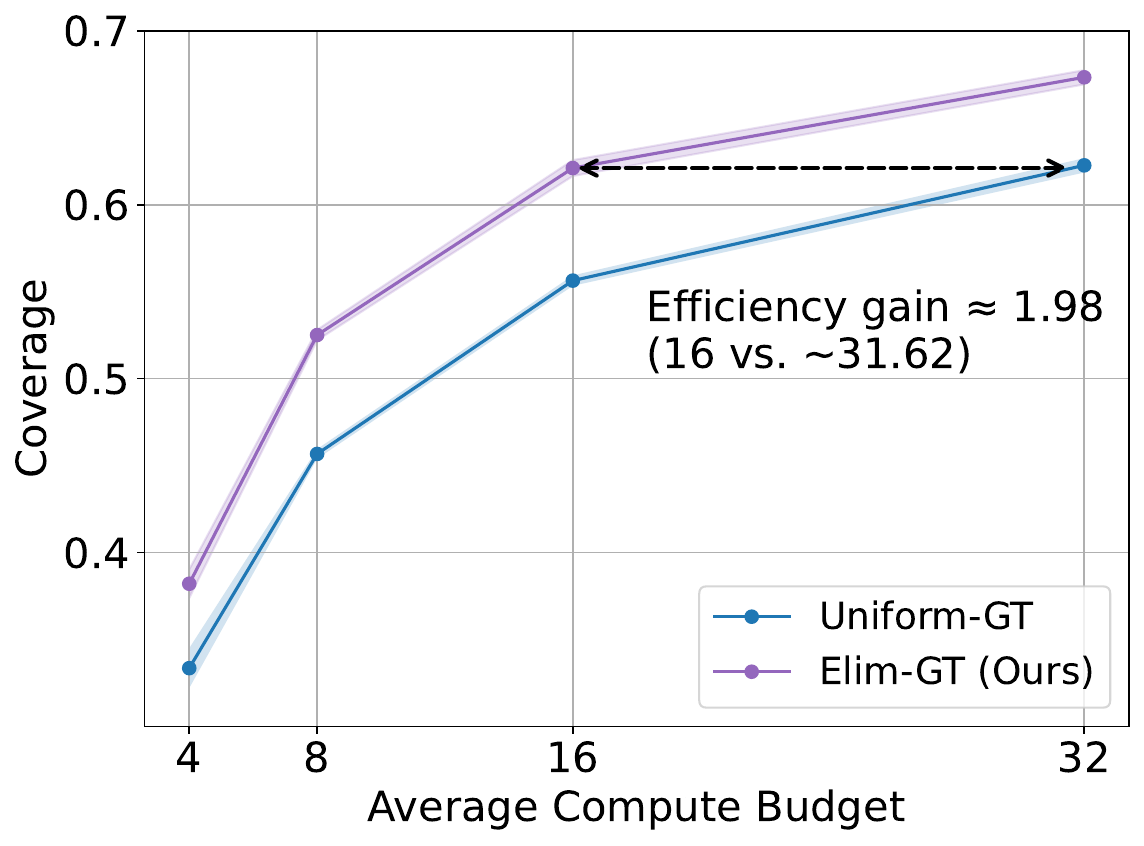}
\caption{\texttt{Llama-3.1-8B-Instruct}}
    \label{fig:cov17}
  \end{subfigure}

  \caption{Results on \textsc{AIME25} (\emph{left}) and LiveCodeBench (\emph{middle and right}). Accuracy result of \texttt{Qwen3-4B} on AIME25 and coverage result of \texttt{DeepSeek-R1-Distill-Llama-8B} on LiveCodeBench are presented in \cref{fig:illustration}.}
  \label{fig:reasoning_acc_cov_grid}
\end{figure}

\textbf{MATH-500-Hard result.}
\cref{fig:ucb-combined} (left) presents experimental results on the MATH-500-Hard-16 dataset, which was constructed to include the most challenging questions in the MATH-500 benchmark. 
We evaluate performance using the GT reward oracle, as PRM-based scores are less reliable on these difficult questions. 
On this dataset, baseline methods and the vanilla version of our \cref{alg:methods} (\elimtext) doesn't performance well, as many questions can not be solved by the base LLM.
However, we find alternative exploration rules introduced in \cref{sec:extensions}---particularly \entropytext{}---achieve significantly better results. 
These findings highlight the benefits of incorporating more nuanced exploration strategies, such as those developed in \cref{sec:extensions}, for effective compute allocation on challenging benchmarks. We defer more experiment results to \cref{app:math_hard}.

\begin{figure}[t]
  \centering
  \begin{subfigure}[t]{.32\linewidth}
    \centering
    \includegraphics[width=\linewidth]{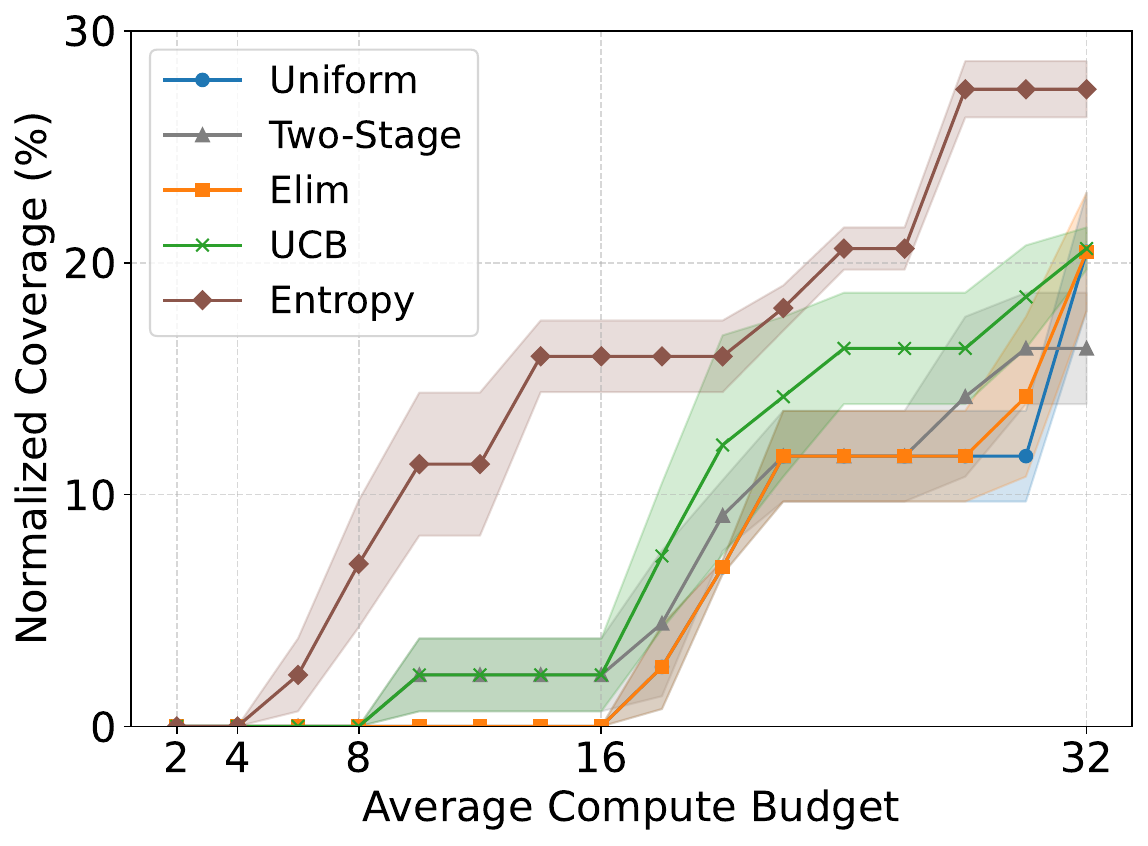}
    \label{fig:mathhard}
  \end{subfigure}\hfill
  \begin{subfigure}[t]{.32\linewidth}
    \centering
    \includegraphics[width=\linewidth]{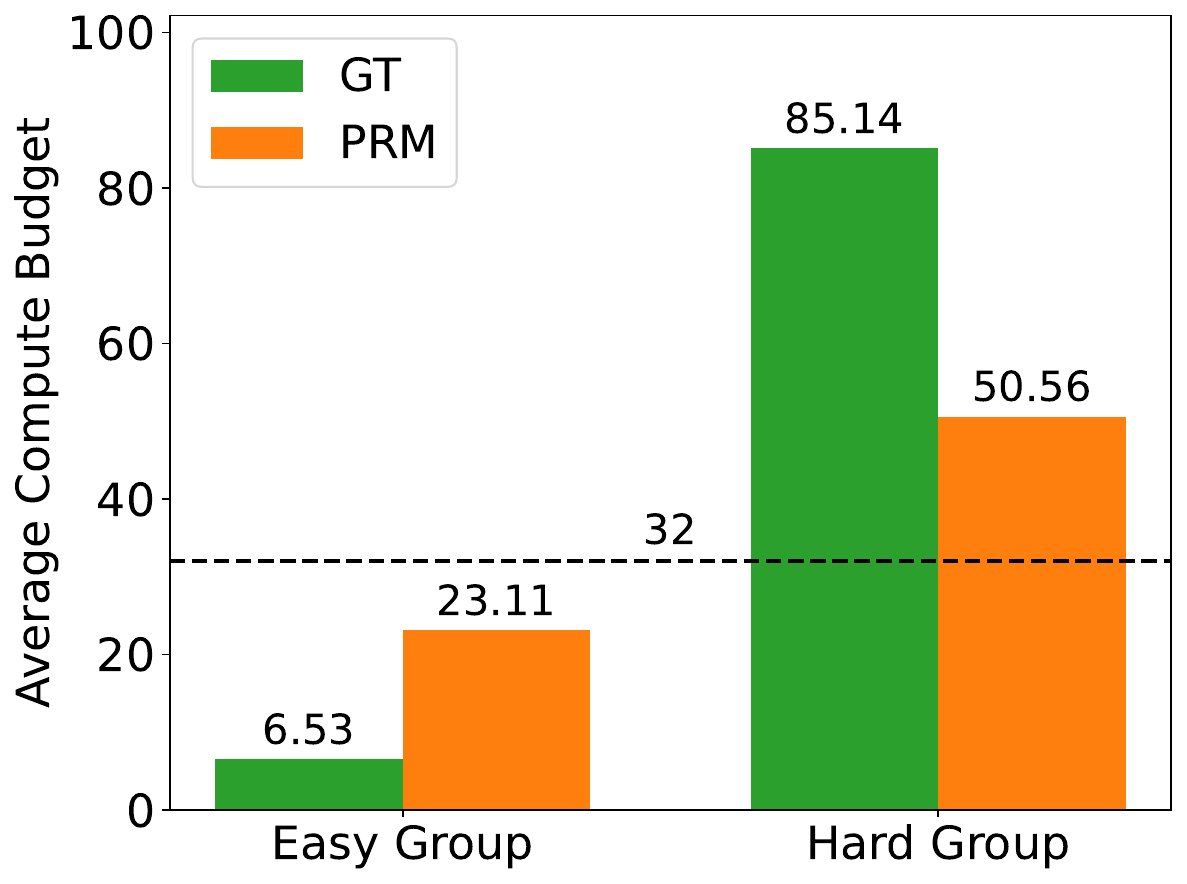}
    \label{fig:methods}
  \end{subfigure}\hfill
  \begin{subfigure}[t]{.32\linewidth}
    \centering
    \includegraphics[width=\linewidth]{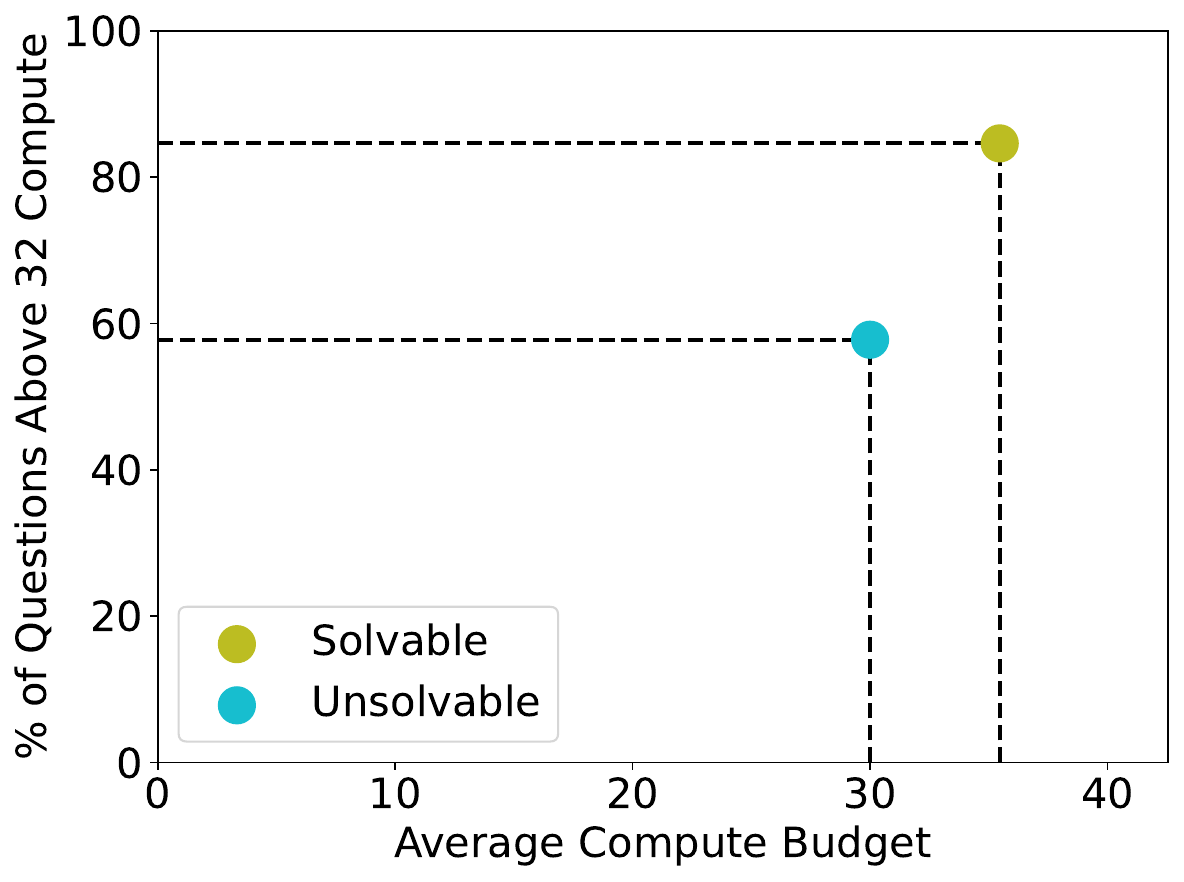}
    \label{fig:alloc-entropy}
  \end{subfigure}

  \caption{Results on MATH-500-Hard with \texttt{Llama-3.2-1B-Instruct} and an average compute budget of $32$. 
  \emph{Left:} Coverage result on \textsc{MATH-500-Hard-16}. 
  \emph{Center:} Allocation behavior of \cref{alg:methods} for easy vs.\ hard groups. 
  \emph{Right:} Allocation behavior of \entropytext{} for solvable vs.\ unsolvable groups.}
  \label{fig:ucb-combined}
  \vspace{-5 pt}
\end{figure}

\subsection{Analysis on the advantages of strategic compute allocation}
\label{sec:analysis_empirical}

We conduct further empirical analyses to illustrate the benefits of strategic compute allocation in two settings:
(1) on standard datasets containing both easy and hard queries, and  
(2) on challenging datasets containing both solvable and unsolvable queries.
All experiments are conducted using \texttt{Llama-3.2-1B-Instruct} with an average compute budget of $32$.

\textbf{Strategic allocation on standard datasets.}
In the first analysis, we partition the MATH-500 dataset into two subsets:  
queries that can be correctly answered with at most $32$ units of compute (easy group), and those that cannot (hard group).  
Intuitively, the easy group consists of questions that require less than 32 units of compute to solve, while the hard group includes questions that would benefit from additional compute.  
  In the middle plot of \cref{fig:ucb-combined}, we visualize the compute allocation of \cref{alg:methods} under both PRM and GT reward oracles.  
Compared to uniform allocation, our algorithm allocates fewer resources to easy queries and more to hard ones.  
This demonstrates the ability of \cref{alg:methods} to strategically allocate compute---reserving effort for harder queries that need it most.

\textbf{Strategic allocation on challenging datasets.}
In the second analysis, we consider the MATH-500-Hard-16 dataset and divide it into solvable queries and unsolvable ones, where the latter cannot be correctly answered even after allocating 500 units of compute.  
In such settings, effective allocation should prioritize the solvable subset, as investing in unsolvable queries leads to wasted compute.  
The right plot of \cref{fig:ucb-combined} shows that under a $32$-unit compute budget, \entropytext allocates more compute on average to solvable queries, and a larger fraction of them receive more than 32 samples.  
This demonstrates that our method learns to concentrate compute on tractable instances, avoiding waste on queries unlikely to be resolved.

To understand why \entropytext behaves this way, we inspect model outputs on these challenging questions in detail. We observe that unsolvable queries often yield invalid responses (e.g., incomplete or poorly formatted), leading to lower entropy across generations.  
In contrast, solvable queries tend to produce more diverse and well-formed outputs, resulting in higher entropy. We defer detailed experiments and analysis of \entropytext extension to \cref{app:math_hard}.  

\subsection{Ablation studies} 
\label{subsec:ablation}

\begin{figure}[t]
  \centering
  \begin{subfigure}[t]{.32\linewidth}
    \centering
    \includegraphics[width=\linewidth]{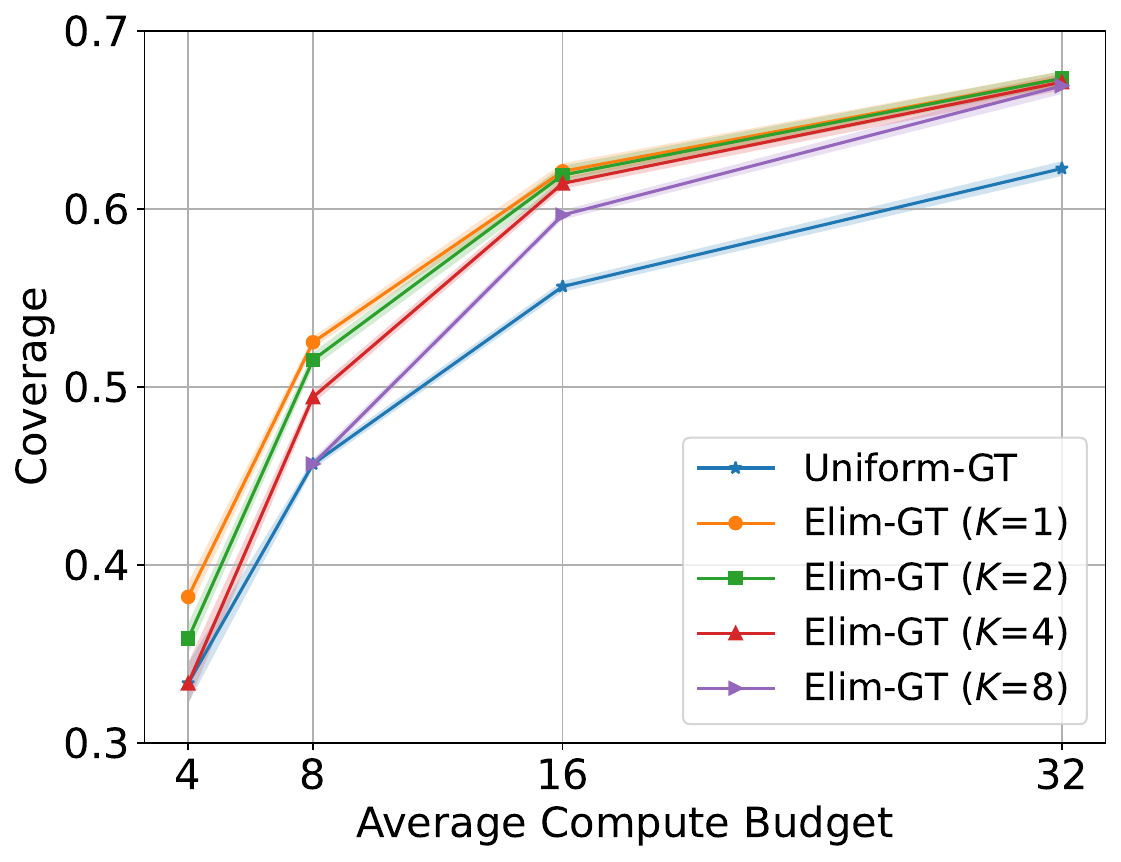}
  \end{subfigure}\hfill
  \begin{subfigure}[t]{.32\linewidth}
    \centering
    \includegraphics[width=\linewidth]{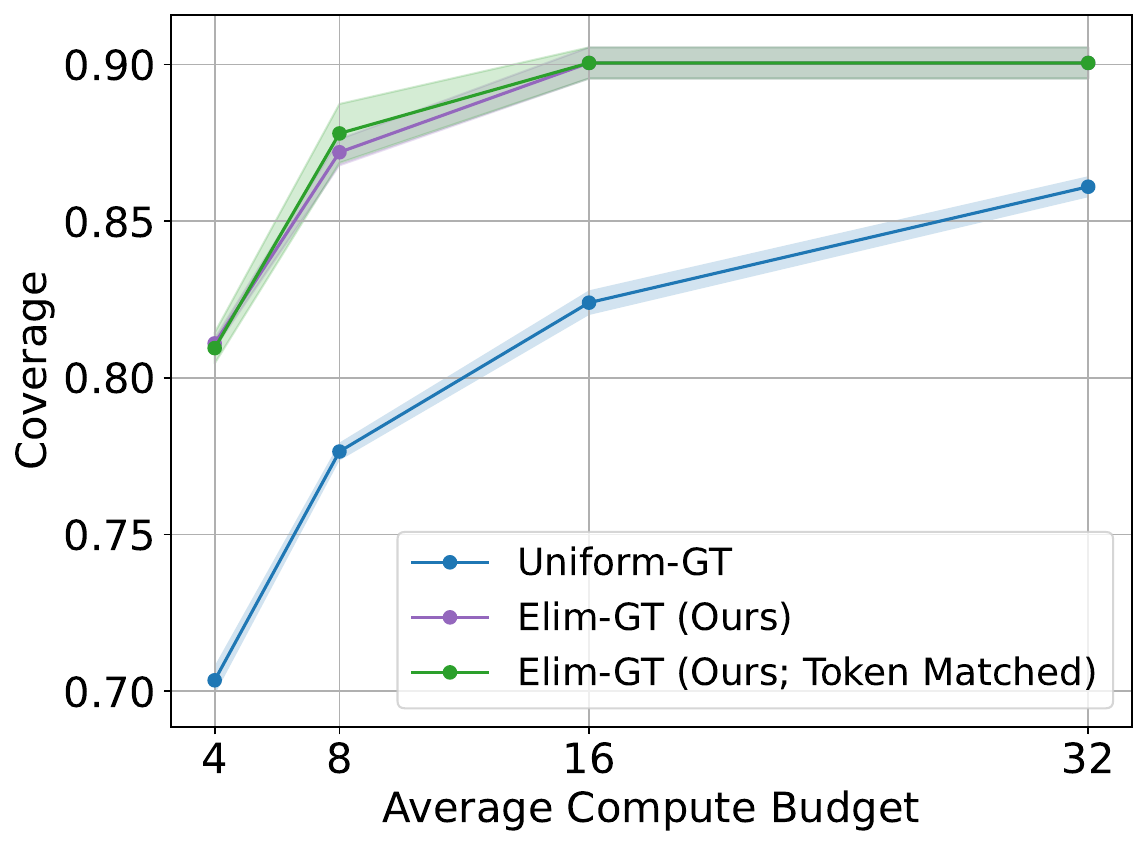}
  \end{subfigure}\hfill
  \begin{subfigure}[t]{.32\linewidth}
    \centering
    \includegraphics[width=\linewidth]{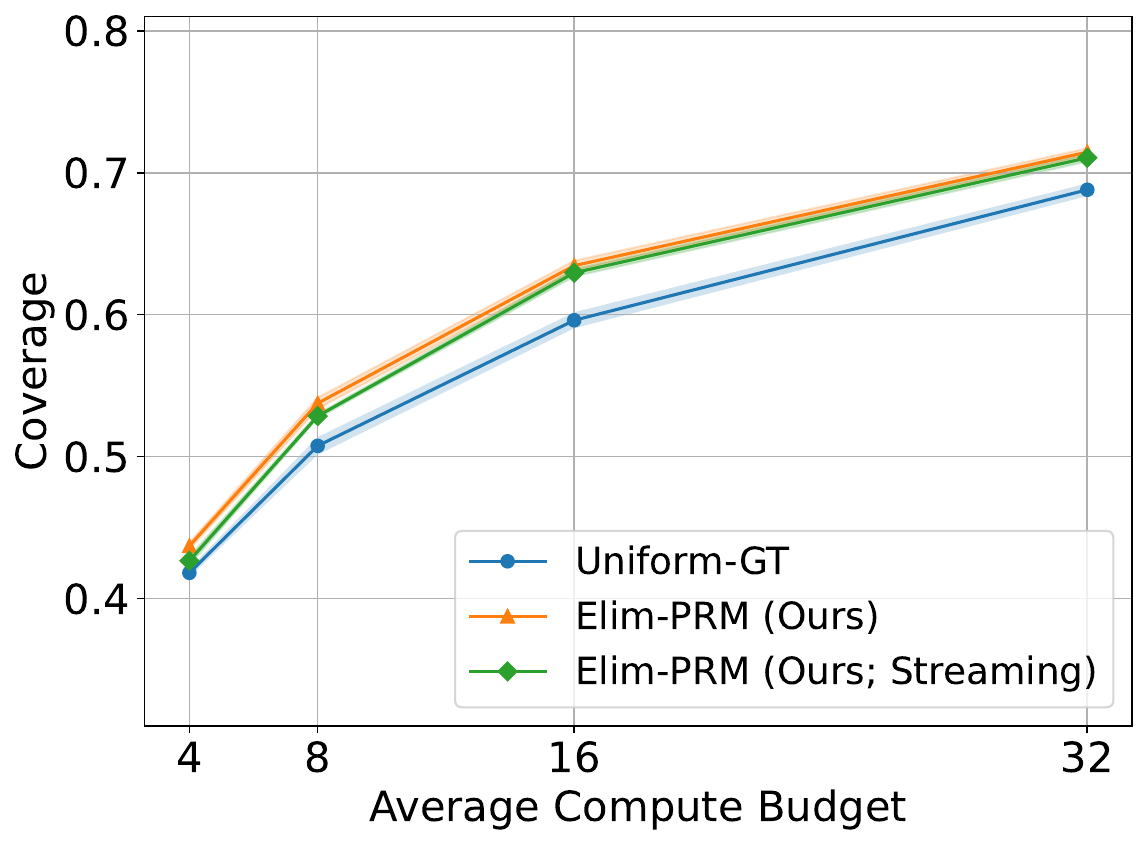}
  \end{subfigure}

  \caption{Ablation studies for \cref{alg:methods}. 
  \emph{Left:} Effect of $K$ on LiveCodeBench with \texttt{Llama-3.2-1B-Instruct}. 
  \emph{Center:} Performance of the token-controlled variant on LiveCodeBench with \texttt{DeepSeek-R1-Distill-Llama-8B}. 
  \emph{Right:} Performance of the streaming variant on MATH-500 with \texttt{Llama-3.2-1B-Instruct}.}
  \label{fig:ablation-two}
  \vspace{-5 pt}
\end{figure}

\textbf{\cref{alg:methods} with different $K$.}
All main experiments in \cref{sec:mainresults} use the default setting $K = 1$. 
Smaller values of $K$ enable finer-grained adaptive allocation and are generally preferred for maximizing performance.
In \cref{fig:ablation-two} (left), we conduct an ablation study with $K \in \{1, 2, 4, 8\}$ on the LiveCodeBench dataset using \texttt{Llama-3.2-1B-Instruct}.
Across all values of $K$, our method consistently outperforms the uniform baseline.
While larger $K$ reduces allocation granularity—making performance closer to uniform allocation under tight budgets—the gap narrows as the average compute budget increases.
These results show that \cref{alg:methods} is robust to the choice of $K$.

\textbf{\cref{alg:methods} with token controls.}
We evaluate \cref{alg:methods} in a token-controlled setting, where compute is measured by total token usage rather than the number of generations.
To ensure comparability, we match the average token budget used by uniform allocation and discard excess samples when needed.
As shown in \cref{fig:ablation-two} (middle), on the LiveCodeBench dataset with \texttt{DeepSeek-R1-Distill-Llama-8B}, \cref{alg:methods} still outperforms the uniform baseline even under equivalent token budgets.

\textbf{\cref{alg:methods} with streaming queries.}
We also test \cref{alg:methods} in a streaming setting, where queries arrive sequentially and the full query pool is not available in advance.
Using the variant described in \cref{sec:extensions}, we evaluate performance in this setting and report results in \cref{fig:ablation-two} (right).
Our method performs comparably to its pool-based counterpart and still outperforms \textsc{Uniform} with pool access, demonstrating its effectiveness under streaming constraints.

\vspace{-5 pt}
\section{Conclusion}
\label{sec:discussion}
We introduce a new perspective on LLM test-time scaling by formulating strategic compute allocation as a bandit learning problem.
We develop adaptive algorithms that estimate query difficulty on the fly and allocate compute to maximize the fraction of correctly answered queries under a fixed compute budget.
Our framework is flexible and extends naturally to incorporate alternative aggregation and exploration strategies, as well as to support both streaming and token-constrained settings.
We provide theoretical guarantees that strategic compute allocation improves compute efficiency over uniform allocation, and we empirically demonstrate substantial performance improvements---up to 11.10\% on MATH-500, 10.82\% on AIME25, and 11.23\% on LiveCodeBench.
These findings underscore the potential of bandit-based compute allocation for more effective test-time scaling.

\bibliography{refs}

\newpage
\appendix
\section{\cref{alg:methods} with Different Exploration Rules}
\label{app:extensions}

Our main algorithmic framework (\cref{alg:methods}) is presented with a simple exploration rule that explores all queries within the active set (lines \ref{line:elim1}-\ref{line:elim2}).
In practice, this rule can be flexibly extended to incorporate diverse exploration objectives.
Motivated by developments in the bandit pure exploration literature, we introduce several alternative exploration rules in the following.
We use $g(x)$ to denote the response set to query $x$, and $N(x) \ldef \abs{g(x)}$ to denote the number of generations so far.

\begin{itemize}
[leftmargin=10pt, itemindent=*]

\item \textbf{Upper confidence bound (\ucbtext).}
For any active query $x \in \cA$, let $\wh r(x) \ldef \sum_{y_i \in g(x)} r(x, y_i) / N(x)$ denote the empirical average reward based on previously collected responses. 
Let $\lambda > 0$ be a hyperparameter. At each round, the \ucbtext exploration rule selects the query based on the following criteria:
\begin{align*}
   \argmax_{x \in \cA} \,\, \wh r(x) + \lambda N(x)^{- 1/ 2}.
\end{align*}
This exploration rule follows the principle of optimism in the face of uncertainty \citep{kalyanakrishnan2012pac, jamieson2014lil}, and prioritizes on selecting queries that are more likely to be solved (i.e., those with higher average rewards). The term $\lambda N(x)^{- 1/ 2}$ is used to construct the upper confidence bound of the reward.

     \item \textbf{\gaptext.}
For any active query $x \in \cA$, let $\wh r(x) \ldef \sum_{y_i \in g(x)} r(x, y_i) / N(x)$ denote the empirical average reward based on previously collected responses. 
At each round, the \gaptext exploration rule selects the query based on the following criterion: 
\begin{align*}
   \argmin_{x \in \cA} \,\, \prn{\gamma - \wh r(x)} \cdot N(x)^{- 1/ 2}.
\end{align*}
This exploration rule prioritizes queries whose estimated reward is close to the elimination threshold $\gamma$, with a preference toward less-explored queries.
 The weighting term $N(x)^{-1/2}$ ensures that compute is allocated inversely proportional to the reward gap from the elimination threshold \citep{locatelli2016optimalalgorithmthresholdingbandit}. 

\item \textbf{\entropytext.}  For any active query $x \in \cA$, let $\{v_k\}$ be the set of distinct responses in $g(x)$, and define the empirical probability of observing response \( v_k \) as $p_k(x) \ldef |\{i: y_i=v_k, y_i \in g(x) \}| / N(x)$.
     Let $H(x) = - \sum_k p_k(x) \log p_k(x)$ denote the entropy of the empirical response distribution $p(x)$. 
     Let $\lambda > 0$ be a hyperparameter. At each round, the \entropytext exploration rule selects the query based on the following criterion: 
\begin{align*}
   \argmax_{x \in \cA} \, \, H(x) + \lambda N(x)^{- 1/ 2}.
\end{align*}
This exploration rule, proposed in our work, prioritizes queries that elicit a more diverse set of responses, as indicated by higher entropy.  
The term \( \lambda N(x)^{-1/2} \) encourages exploration of under-explored queries by balancing the trade-off between response diversity and sample count.
\end{itemize}

\section{Supporting Results from \cref{sec:theoretical_analysis}}
\label{app:methods}

\subsection{Proof of \cref{thm:sample_complexity}}
\thmSampleComplexity*
\begin{proof}
  We first prove that $B_\oursmath= \wt \Theta(\sum_{x \in \cS} \frac{1}{\Delta_x})$ under the elimination rule of \cref{alg:methods}. Note that under \cref{assumption:reward_model}, a query is eliminated if and only if it is correctly answered.\footnote{Since the elimination rule works for all variants of \cref{alg:methods} introduced in \cref{app:extensions}, the guarantee also holds for these variants.}

For the upper bound, 
we denote $\wb \delta \ldef \delta / \abs{\cS}$, $n_x \ldef \frac{1}{\Delta_x} \log \frac{1}{\wb \delta}$ and consider the following event:
$$E_x \ldef \crl{\text{query $x$ will be correctly answered within $n_x$ random generations}}.$$ 
We know that $E_x$ happens with probability at least $1 - \wb \delta$ as the probability of $\wb E_x$ is upper bounded by $\wb \delta$:
\begin{align*}
    \prn{1 - \Delta_x}^{n_x} \leq e^{- \Delta_x \cdot n_x} = e^{- \Delta_x \cdot \frac{1}{\Delta_x} \log \frac{1}{\wb \delta}} = \wb \delta.
\end{align*}
A union bound over $x \in \cS$ leads to $\bbP(\cup_{x \in \cS} E_x) \geq 1 - \sum_{x \in \cS} \bbP(\wb E_x) \geq 1 - \delta$. As a result, the with probability of at least $1 - \delta$, \cref{alg:methods} and its variants in \cref{app:extensions} output correct responses for all queries with compute budget $\sum_{x \in \cS} n_x = \sum_{x \in \cS} \frac{1}{\Delta_x} \log \frac{\abs{\cS}}{\delta} = \wt O(\sum_{x \in \cS}  \frac{1}{\Delta_x})$. 

For the lower bound,
we quantify the amount of compute $n_x$ needed to correctly answer any fixed query $x \in \cS$
with probability at least $1-\delta$.
Recall that each independent generation answers $x$ correctly with probability $\Delta_x \in (0,1)$.
Therefore, after $n_x$ independent generations, the probability that \emph{none} of them is correct equals
\begin{align*}
    \bbP(\text{query $x$ is not correctly answered within $n_x$ generations})
    =
    (1-\Delta_x)^{n_x}.
\end{align*}
If an algorithm outputs a correct response for \emph{all} queries in $\cS$ with probability at least $1-\delta$,
then in particular it must output a correct response for this fixed query $x$ with probability at least $1-\delta$,
which requires $(1-\Delta_x)^{n_x} \le \delta$. 
Taking logarithms gives $n_x
    \geq
    \frac{\log(1/\delta)}{-\log(1-\Delta_x)}$.
Using the inequality $-\log(1-u) \le \frac{u}{1-u}$ for all $u \in (0,1)$, we further obtain
\begin{align*}
    n_x
    \geq
    \frac{1-\Delta_x}{\Delta_x}\log\frac{1}{\delta}.
\end{align*}
Summing the above inequality over all $x \in \cS$ yields the compute lower bound $\Omega(\sum_{x \in \cS} \frac{1 - \Delta_x}{\Delta_x} \log \frac{1}{\delta})$.
In particular, if $\Delta_x \le 1-c$ for all $x \in \cS$ and some fixed constant $c \in (0, 1)$,
then the lower bound simplifies to $\wt \Omega({\sum_{x \in \cS}\frac{1}{\Delta_x}})$, which matches our upper bound up to logarithmic factors.

As for uniform compute allocation, it allocates the same amount of compute for each query $x \in \cS$. 
We thus only need to quantify the amount of compute $\wb n$ needed to correctly answer $\wb x \ldef \argmin_{x \in \cS} \Delta_x$, the hardest query within $\cS$, with high probability. The above lower bound analysis shows that one needs $\Omega(\frac{1}{\Delta_{\wb x}} \log \frac{1}{\delta})$ compute to correctly answer the $\wb x$ with probability at least $1-\delta$. Since uniform allocation assigns the same budget to every query,
the total compute must satisfy $B_{\unif} = \Omega ({\frac{\abs{\cS}}{\Delta_{\wb x}}\log\frac{1}{\delta}} )=  \wt \Omega (\max_{x \in \cS}\frac{\abs{\cS}}{ \Delta_x})$.
\end{proof}

\subsection{Proof of \cref{prop:ETC}}
\propETC*

\begin{proof}
Suppose the ETC algorithm allocates $m$ generations to every query $x \in \cS$ in the exploration stage.
Let $n_x$ denote the \emph{total} number of generations allocated to query $x$ across both stages.
Then we have $n_x \ge m$ for all $x \in \cS$.
On the other hand, by the same per-query lower bound argument used in the proof of \cref{thm:sample_complexity},
to correctly answer query $x$ with probability at least $1-\delta$ it is necessary that
$n_x = \Omega(\frac{1}{\Delta_x}\log\frac{1}{\delta})$.
Combining these two necessary conditions yields
$n_x = \Omega(\max(m, \frac{1}{\Delta_x}\log\frac{1}{\delta}))$.
Summing over all $x \in \cS$ gives
$B_\etcmath = \Omega(\sum_{x \in \cS}\max(m, \frac{1}{\Delta_x}\log\frac{1}{\delta}))
= \wt \Omega(\sum_{x \in \cS}\max(m, \frac{1}{\Delta_x}))$.
\end{proof}

\section{Other details and results for experiments}
\label{app:experiments}

\subsection{Additional details on experimental setups}

\subsubsection{Additional hyperparameters}
We conduct all experiments on two NVIDIA RTX 6000 Ada GPUs.
We use vLLM \citep{kwon2023efficient} for LLM response generation, with a temperature of $0.6$. 

The extended exploration rules introduced in \cref{app:extensions} require additional hyperparameters. 
For \ucbtext, we set $\lambda = 1$; for \entropytext, we set $\lambda = 3$.
For \ucbtext, \gaptext, and \entropytext, we additionally set a \texttt{max\_samples} hyperparameter to prevent over-allocation of compute on difficult queries. The values of the \texttt{max\_samples} hyperparameter will be introduced along with the experiment results.

\subsubsection{Additional details on MATH-500-Hard datasets}
As discussed in \cref{sec:setup}, we construct the MATH-500-Hard dataset by removing queries from MATH-500 that can be solved with 16 units of compute. On top of this, we construct another dataset by removing queries that can be solved with 8 units of compute, and we call this dataset MATH-500-Hard-8. After removing these relatively easy queries, MATH-500-Hard-8 contains 71 challenging queries, and MATH-500-Hard-16 contains 56 challenging queries. 
We further divide MATH-500-Hard queries into two subsets: the subset that cannot be solved after allocating $M$ compute units (Unsolvable) and the rest (Solvable). 
We set $M = 500$ for \texttt{Llama-3.2-1B-Instruct}, and $M = 350$ for \texttt{Llama-3.1-8B-Instruct}. For normalized coverage, we calculate it based on the percentage of solved questions over all solvable questions.
On these MATH-500-Hard datasets, 
we select the best from $\texttt{max\_samples} \in \crl{36, 48, 64}$
for \textsc{UCB}, \textsc{Gap}, and \entropytext. The setting of \texttt{max\_samples} is discussed in the same way as \cref{sec:extensions}.

\subsubsection{Prompts Selection}
\paragraph{Prompts for MATH-500.}
We include four in-context examples in the prompt for MATH-500 for our Llama models.  
An illustrative example used in our experiments is shown below.

\begin{tcolorbox}[
    colback=blue!5!white,      %
    colframe=blue!75!black,    %
    width=\linewidth,
    arc=2mm,                   %
    boxrule=1pt,               %
    left=3mm, right=3mm,       %
    top=2mm, bottom=2mm,
    title=\textbf{MATH prompts}
]
\noindent\textbf{Problem:} If $\det \mathbf{A} = 2$ and $\det \mathbf{B} = 12,$ then find $\det(\mathbf{A}\,\mathbf{B}).$

\medskip
\noindent\textbf{Solution:} 
We have that \[
\det(\mathbf{A}\,\mathbf{B})
= \det(\mathbf{A})\,\det(\mathbf{B})
= (2)\,(12)
= \boxed{24}.
\]
Final Answer: The final answer is $24$. I hope it is correct.

\medskip
\begin{center}
$\vdots$
\end{center}
\medskip

\noindent\textbf{Problem: \{actual test question\}}

\medskip
\noindent\textbf{Solution:} 
\end{tcolorbox}

\paragraph{Prompts for LiveCodeBench.}
We use the prompt provided on the official GitHub of LiveCodeBench \citep{jain2024livecodebenchholisticcontaminationfree}.
The prompt used in our experiments is shown below.

\begin{tcolorbox}[
    colback=blue!5!white,      %
    colframe=blue!75!black,    %
    width=\linewidth,
    arc=2mm,                   %
    boxrule=1pt,               %
    left=3mm, right=3mm,       %
    top=2mm, bottom=2mm,
    title=\textbf{LiveCodeBench CoT prompt}
]
You are given a Python function and an assertion containing an input to the function. Complete the assertion with a literal (no unsimplified expressions, no function calls) containing the output when executing the provided code on the given input, even if the function is incorrect or incomplete. Do NOT output any extra information. Execute the program step by step before arriving at an answer, and provide the full assertion with the correct output in [ANSWER] and [/ANSWER] tags, following the examples.

[PYTHON]
def performOperation(s):
    s = s + s
    return "b" + s + "a"
assert performOperation(s = "hi") == ??
[/PYTHON]

[THOUGHT]
Let’s execute the code step by step:
1. The function \texttt{performOperation} is defined, which takes a single argument \texttt{s}.
2. The function is called with the argument \texttt{"hi"}, so within the function, \texttt{s} is initially \texttt{"hi"}.
3. Inside the function, \texttt{s} is concatenated with itself, so \texttt{s} becomes \texttt{"hihi"}.
4. The function then returns a new string that starts with \texttt{"b"}, followed by \texttt{s} (now \texttt{"hihi"}), and ends with \texttt{"a"}.
5. The return value is therefore \texttt{"bhihia"}.
[/THOUGHT]

[ANSWER]
assert performOperation(s = "hi") == "bhihia"
[/ANSWER]
\end{tcolorbox}

\paragraph{AIME prompt.}
The same format as MATH-500 prompts, but without any in-context examples. 

\begin{tcolorbox}[
    colback=blue!5!white,      %
    colframe=blue!75!black,    %
    width=\linewidth,
    arc=2mm,                   %
    boxrule=1pt,               %
    left=3mm, right=3mm,       %
    top=2mm, bottom=2mm,
    title=\textbf{AIME prompt example}
]
\noindent\textbf{Problem: \{actual test question\}}

\medskip
\noindent\textbf{Solution:} 
\end{tcolorbox}

\paragraph{Output segmentation for PRMs.}
The output from our models follow a structured format shown below.
We segment each solution into individual reasoning steps using the marker \texttt{Step \#\#}, and feed the resulting step-level segments to the PRM \texttt{Qwen2.5-Math-PRM-7B} for evaluation. Prior work commonly uses \texttt{\textbackslash n\textbackslash n} as a step delimiter and finds it effective \citep{zhang2025lessonsdevelopingprocessreward}. In our setting, however, we find that \texttt{Step \#\#} better matches the formatting of our outputs and yields higher overall performance; we therefore adopt \texttt{Step \#\#} for segmentation throughout (see \cref{tab:space_result}).
\looseness=-1

\begin{tcolorbox}[
    colback=blue!5!white,      %
    colframe=blue!75!black,    %
    width=\linewidth,
    arc=2mm,                   %
    boxrule=1pt,               %
    left=3mm, right=3mm,       %
    top=2mm, bottom=2mm,
    title=\textbf{Expected generation from both Llama models about MATH questions}
    ]
    Step 1: [Description of first step]

Step 2: [Description of second step]

Step 3: [Description of third step]

Step...

The final answer is: \verb|$\boxed{}$|

\end{tcolorbox}

\begin{table}[!ht]
  \centering
  \caption{Effects of segmentation keywords on MATH-500 with \texttt{Llama-3.1-8B-Instruct}}
  \label{tab:space_result}
  \begin{tabular}{@{}lcccc@{}}
    \toprule
    \textbf{Method} & \textbf{4} & \textbf{8} & \textbf{16} & \textbf{32} \\
    \midrule  
     Uniform (delimiter: \#\#) & 60.26\% & 63.30\% & 66.56\% & 67.80\% \\
     Uniform (delimiter: \texttt{\textbackslash n\textbackslash n})& 59.06\% & 61.50\% & 64.36\% & 64.76\% \\
     Elim (delimiter: \#\#)& 62.50\% & 65.90\% & 67.96\% & 69.46\% \\
     Elim (delimiter: \texttt{\textbackslash n\textbackslash n})         & 61.96\% & 64.66\% & 65.66\% & 66.20\% \\
    \bottomrule
  \end{tabular}
\end{table}

\subsection{Additional experimental results}
\label{app:additional_experiments}

\subsubsection{Additional experiments on MATH-500 and LiveCodeBench}
\label{app:additional_experiments}

\paragraph{Results of extended exploration rules on MATH-500.}
\cref{fig:coverage_all} provides the coverage performance of \ucbtext and \gaptext on the MATH-500 dataset; the \texttt{max\_samples} hyperparameter for both \ucbtext and \gaptext are provided in \cref{tab:max-samples-expanded}.

\begin{figure}[ht]
  \centering
  \includegraphics[width=.45\linewidth]{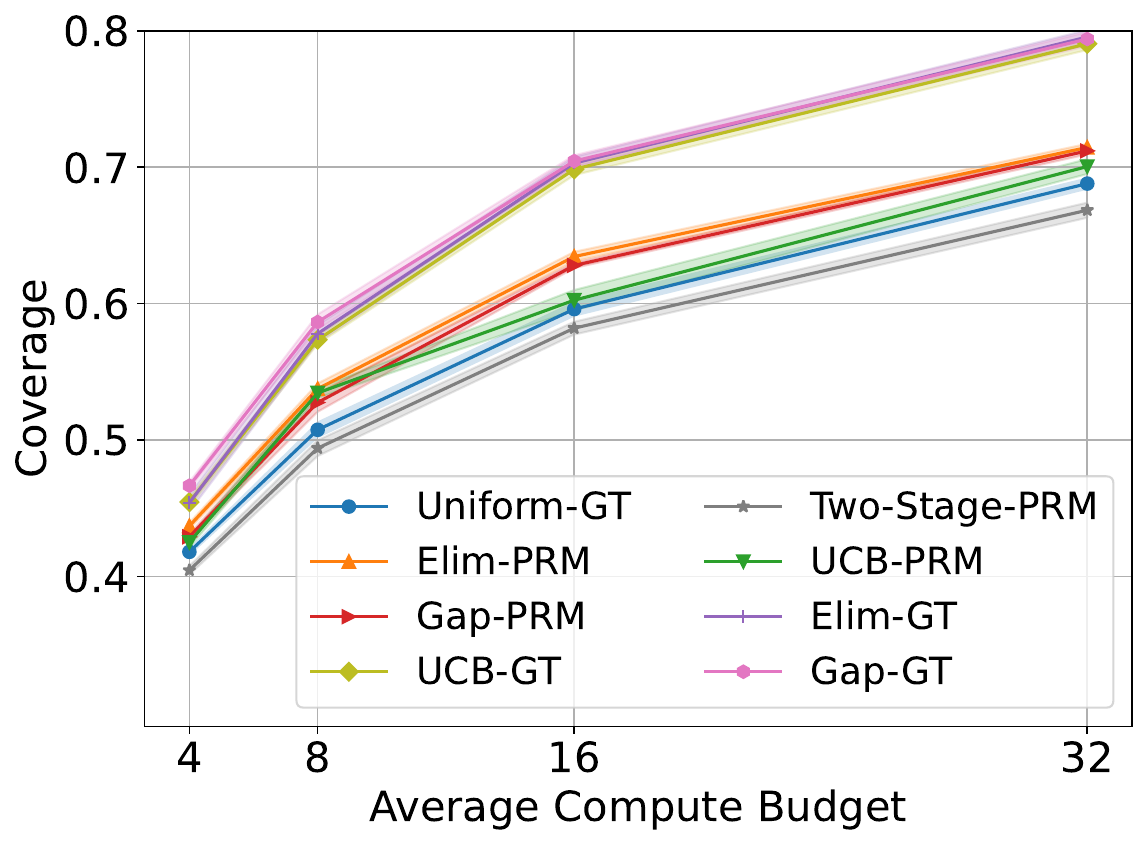}\hfill
  \includegraphics[width=.45\linewidth]{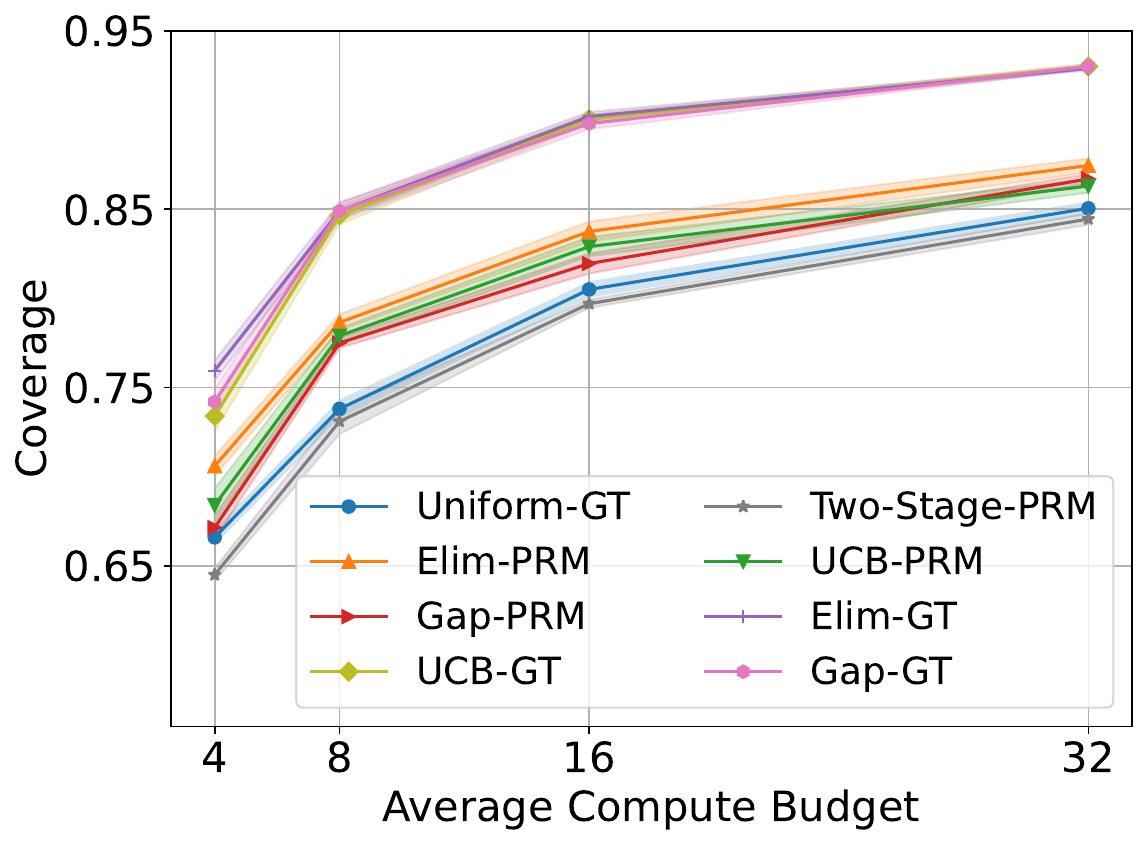}
  \caption{Results on MATH-500 with extended exploration rules. \emph{Left:} Results with \texttt{Llama-3.2-1B-Instruct}. \emph{Right:} Results with \texttt{Llama-3.1-8B-Instruct}.}
  \label{fig:coverage_all}
  \vspace{-10 pt}
\end{figure}

\begin{table}[ht]
  \centering
    \caption{The choice of \texttt{max\_samples} by model-scene and compute budget on MATH-500.}
  \label{tab:max-samples-expanded}
  \begin{tabular}{@{}lcccc@{}}
    \toprule
    \textbf{Scene} & \textbf{Avg. budget 4} & \textbf{Avg. budget 8} & \textbf{Avg. budget 16} & \textbf{Avg. budget 32} \\
                   & (\texttt{max\_samples}) & (\texttt{max\_samples}) & (\texttt{max\_samples}) & (\texttt{max\_samples}) \\
    \midrule
    \texttt{Llama-3.1} w/ GT    & 40                & 40                 & 120                  & 300                   \\
    \texttt{Llama-3.1} w/ PRM   & 12                &  40                & 80                  & 120                   \\
    \texttt{Llama-3.2} w/ GT    & 40                & 40                 & 120                  & 120                   \\
    \texttt{Llama-3.2} w/ PRM   & 12                & 12                 & 60                   & 60                    \\
    \bottomrule
  \end{tabular}

\end{table}

\paragraph{Online compute allocation on MATH-500.}
We further evaluate a streaming variant of \cref{alg:methods} in \cref{tab:streaming}, where queries arrive sequentially. In this setting, compute is allocated on a per-query basis over 500 rounds, rather than jointly across the entire dataset.
For the streaming variant, we tune the \texttt{max\_samples} hyperparameter via a binary search over the grid $\{15 + 5k \mid k = 0, \dots, 37\}$.
Even under the streaming constraint, our algorithm still outperforms uniform allocation with pool access.
\looseness=-1

\begin{table}[H]
  \centering
  \caption{Results of the streaming variant of \cref{alg:methods} on MATH-500 with \texttt{Llama-3.2-1B-Instruct}.}
  \label{tab:streaming}
  \begin{tabular}{@{}lcccc@{}}
    \toprule
    & \multicolumn{4}{c}{\textbf{Average compute budget}} \\
    \cmidrule(l){2-5}
    \textbf{Method} & \textbf{4} & \textbf{8} & \textbf{16} & \textbf{32} \\
    \midrule
    \multicolumn{5}{@{}l}{\textbf{Coverage}} \\
    \addlinespace[1pt]
    Uniform-PRM      & 41.80\% & 50.75\% & 59.60\% & 68.80\% \\
    Elim-PRM         & 43.70\% & 53.75\% & 63.45\% & 71.45\% \\
    Elim-PRM (Streaming)       & 42.65\% & 52.85\% & 62.95\% & 71.05\% \\
    \addlinespace
    \multicolumn{5}{@{}l}{\textbf{Accuracy}} \\
    \addlinespace[1pt]
    Uniform-PRM      & 35.00\% & 40.70\% & 46.45\% & 49.45\% \\
    Elim-PRM         & 37.15\% & 43.70\% & 48.90\% & 52.60\% \\
    Elim-PRM (Streaming)       & 36.55\% & 43.10\% & 48.55\% & 52.20\% \\
    \bottomrule
  \end{tabular}
\end{table}

\paragraph{Token-controlled evaluation.}
To isolate the effect of allocation from differences in total generation length, we run a token-controlled variant of \cref{alg:methods}. Specifically, we first record the total number of generated tokens under \textsc{Uniform} allocation, and then run \cref{alg:methods} while enforcing the same total token budget (token-matched). As shown in \cref{tab:math500-llama32-1b-token} (MATH-500) and \cref{tab:lcb-deepseek-token} (LiveCodeBench), \cref{alg:methods} remains consistently better than under \textsc{Uniform} with the same number of tokens.

\begin{table}[ht]
  \centering
  \caption{Token-matched evaluation on MATH-500 with \texttt{Llama-3.2-1B-Instruct}.}
  \label{tab:math500-llama32-1b-token}
  \begin{tabular}{lcccc}
    \toprule
    & \multicolumn{4}{c}{\textbf{Average compute budget}} \\
    \cmidrule(l){2-5}
    \textbf{Method} & \textbf{4} & \textbf{8} & \textbf{16} & \textbf{32} \\
    \midrule
    \multicolumn{5}{l}{\textbf{Coverage}} \\
    \addlinespace[1pt]
    Uniform-PRM      & 41.80\% & 50.75\% & 59.60\% & 68.80\% \\
    Elim-PRM         & 43.70\% & 53.75\% & 63.45\% & 71.45\% \\
    Elim-PRM (Token Matched)        & 42.70\% & 52.10\% & 60.90\% & 69.95\% \\
    \addlinespace
    \multicolumn{5}{l}{\textbf{Accuracy}} \\
    \addlinespace[1pt]
    Uniform-PRM      & 35.00\% & 40.70\% & 46.45\% & 49.45\% \\
    Elim-PRM         & 37.15\% & 43.70\% & 48.90\% & 52.60\% \\
    Elim-PRM (Token Matched)        & 36.50\% & 42.60\% & 48.05\% & 51.85\% \\
    \bottomrule
  \end{tabular}
\end{table}

\begin{table}[!ht]
  \centering
  \caption{Token-matched evaluation on LiveCodeBench with \texttt{DeepSeek-R1-Distill-Llama-8B}.}
  \label{tab:lcb-deepseek-token}
  \begin{tabular}{lcccc}
    \toprule
    & \multicolumn{4}{c}{\textbf{Average compute budget}} \\
    \cmidrule(l){2-5}
    \textbf{Method} & \textbf{4} & \textbf{8} & \textbf{16} & \textbf{32} \\
    \midrule
    \multicolumn{5}{l}{\textbf{Coverage}} \\
    \addlinespace[1pt]
    Uniform-GT      & 73.43\% & 81.05\% & 86.01\% & 89.87\% \\
    Elim-GT         & 84.66\% & 91.02\% & 94.00\% & 94.00\% \\
    Elim-GT (Token Matched)        & 84.50\% & 91.65\% & 94.00\% & 94.00\% \\
    \bottomrule
  \end{tabular}
\end{table}

\begin{figure}[ht]
  \centering
  \includegraphics[width=.45\linewidth]{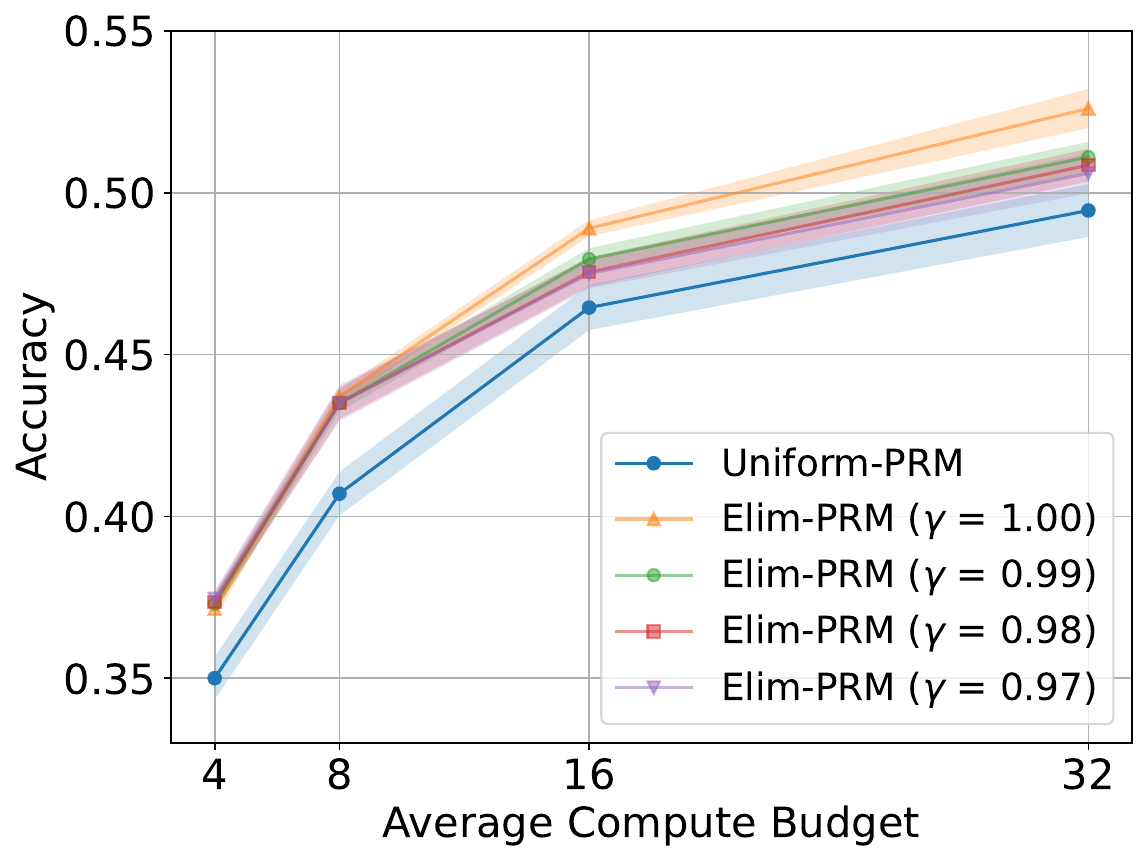}
  \caption{Ablation study for the hyperparameter $\gamma$. Experiments conducted on MATH-500 with \texttt{Llama-3.2-1B-Instruct}.}
\label{fig:thresholds_llama32}
\end{figure}

\paragraph{Choices of the threshold $\gamma$.} 
\label{app:thresholds}
The elimination threshold $\gamma$ controls which queries are confidently answered and can be removed from the active set.  
Since higher reward scores generally indicate higher-quality responses, setting a high threshold $\gamma \in [0, 1]$ is natural.  
We conduct ablations with $\gamma \in \{0.97, 0.98, 0.99, 1.0\}$ on MATH-500 and report the results in \cref{fig:thresholds_llama32}.  
We observe that $\gamma = 1.0$ performs slightly better, likely because \texttt{Qwen2.5-Math-PRM-7B} assigns a deterministic score of 1.0 to answers it deems correct—a property specific to this PRM.  
Importantly, across all tested values, our method consistently outperforms the uniform allocation baseline, indicating that \cref{alg:methods} is robust to variations in $\gamma$.

\paragraph{Effect of PRM quality.}
To study the impact of PRMs quality on the effectiveness of our method, we conducted additional experiments using a relatively weak PRM, \texttt{Qwen2.5-Math-7B-PRM800K}, in contrast to the strong PRM, \texttt{Qwen2.5-Math-PRM-7B}, employed in the main experiments. As shown in the \cref{fig:weakprm}, our method consistently outperforms the baselines across all compute budgets under both weak and strong reward models. Notably, performance improves with a stronger oracle, suggesting that our algorithm benefits directly from higher-quality reward signals.

\begin{figure}[ht]
  \centering
  \begin{subfigure}[t]{0.45\textwidth}
    \centering
    \includegraphics[width=\linewidth]{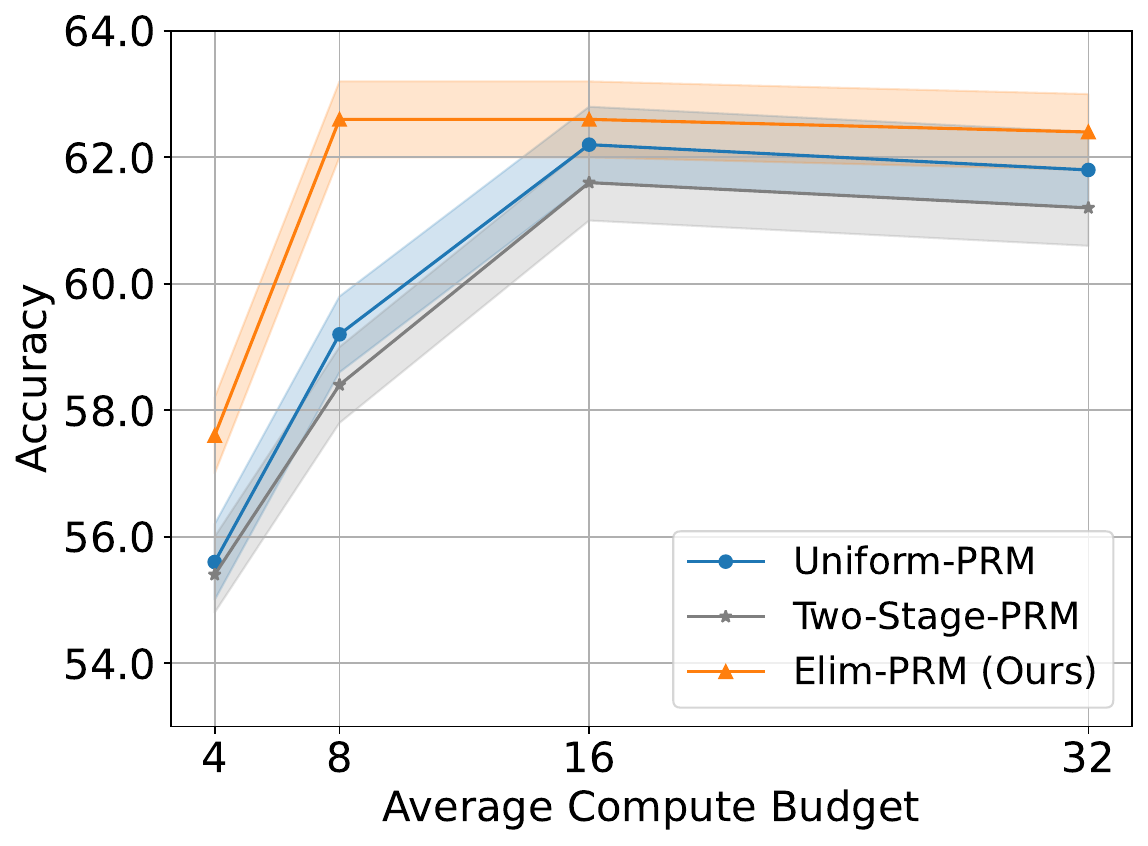}
     \caption{Weak PRM: \texttt{Qwen2.5-Math-7B-PRM800K}}
  \end{subfigure}
  \hfill
  \begin{subfigure}[t]{0.45\textwidth}
    \centering
    \includegraphics[width=\linewidth]{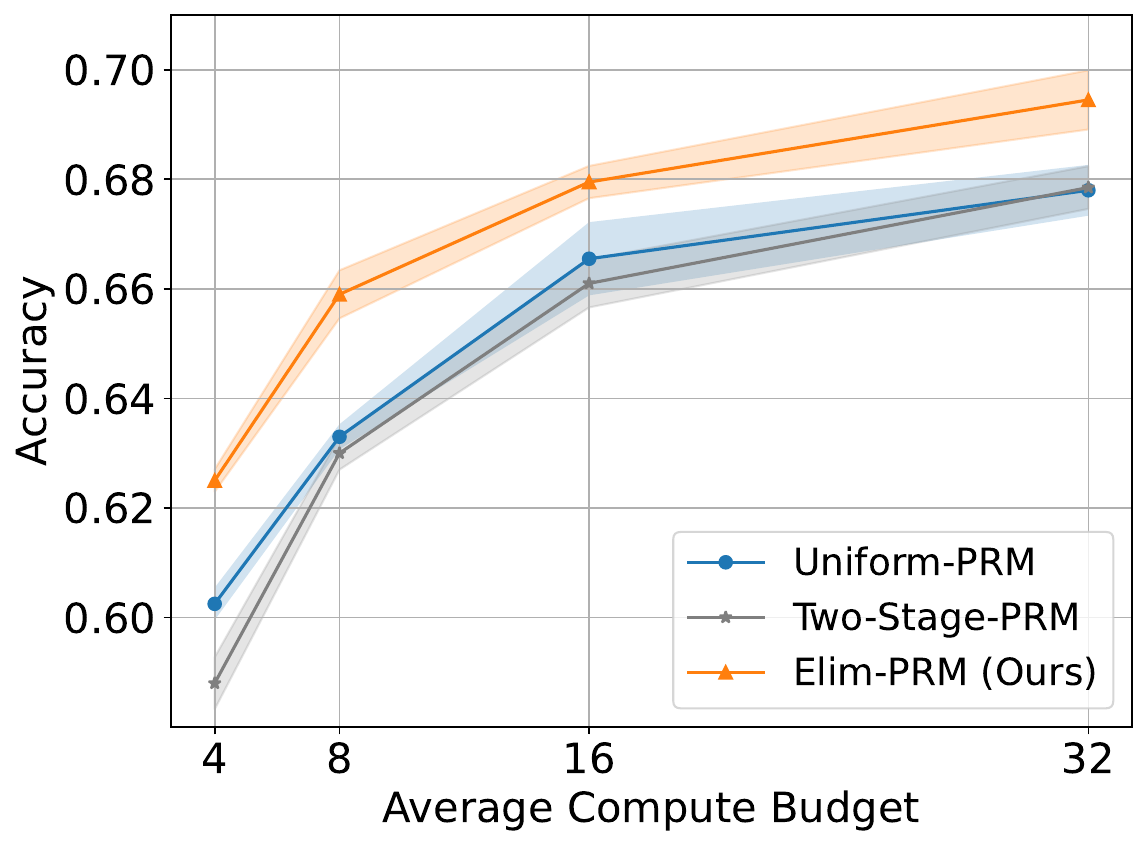}
     \caption{Strong PRM: \texttt{Qwen2.5-Math-PRM-7B}}
  \end{subfigure}

  \caption{Effect of PRM quality on MATH-500 with \texttt{Llama-3.1-8B-Instruct}. We compare performance using a weaker PRM (\texttt{Qwen2.5-Math-7B-PRM800K}) versus a stronger PRM (\texttt{Qwen2.5-Math-PRM-7B}) across compute budgets.
  }
  \vspace{-10pt}
  \label{fig:weakprm}
\end{figure}

\subsubsection{Additional experiments and analyses on MATH-500-Hard}
\label{app:math_hard}

\begin{figure}[ht]
  \centering
  \begin{subfigure}[t]{0.45\textwidth}
    \centering
    \includegraphics[width=\linewidth]{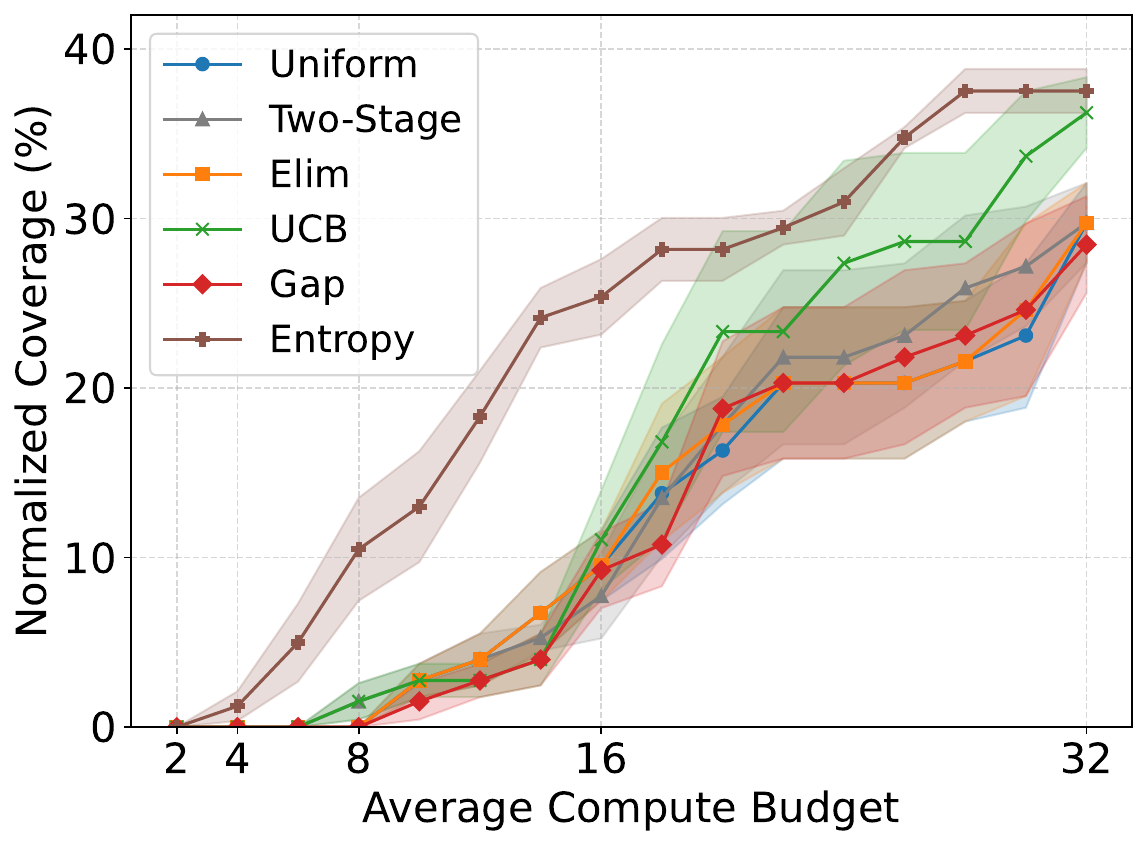}
    \caption{Coverage with \texttt{Llama-3.2-1B-Instruct} on MATH‑500‑Hard‑8}
    \label{fig:model2}
  \end{subfigure}
  \hfill
    \begin{subfigure}[t]{0.45\textwidth}
    \centering
    \includegraphics[width=\linewidth]{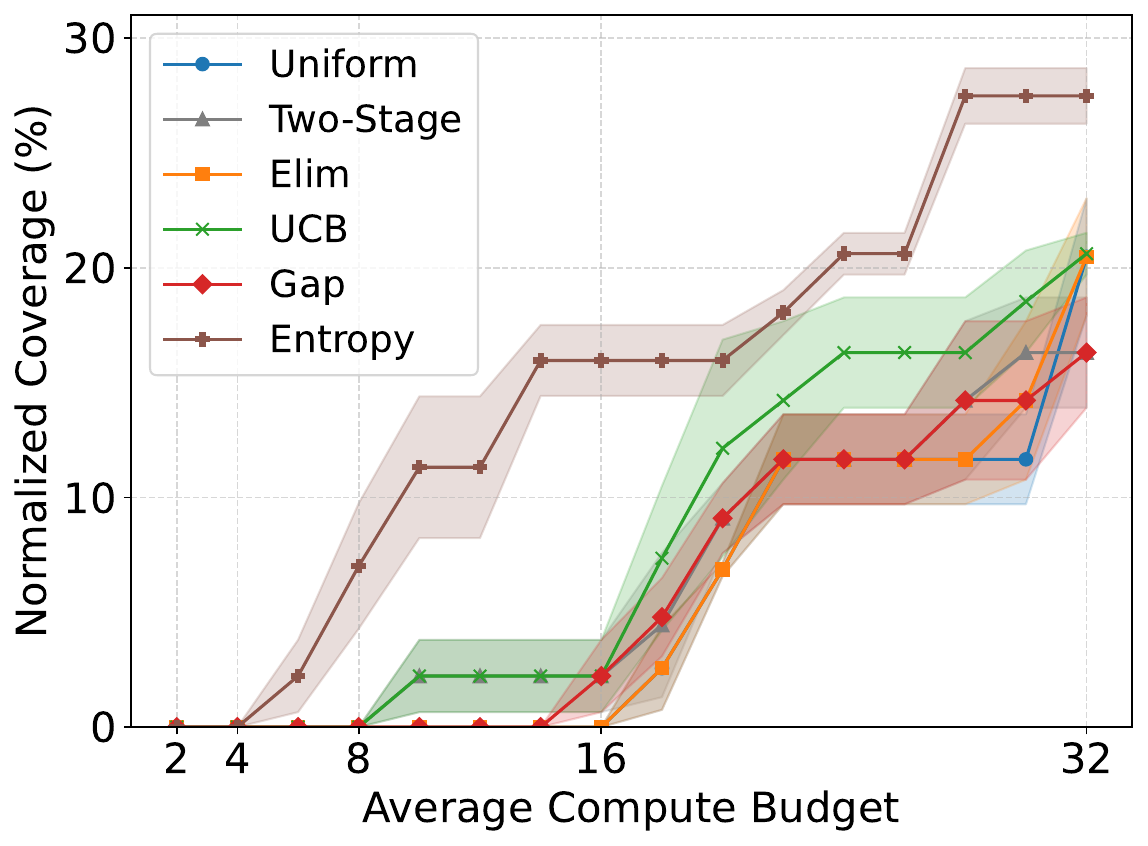}
    \caption{Coverage with \texttt{Llama-3.2-1B-Instruct} on MATH‑500‑Hard‑16}
    \label{fig:model4}
  \end{subfigure}

  \vspace{0.8em}

  \begin{subfigure}[t]{0.45\textwidth}
    \centering
    \includegraphics[width=\linewidth]{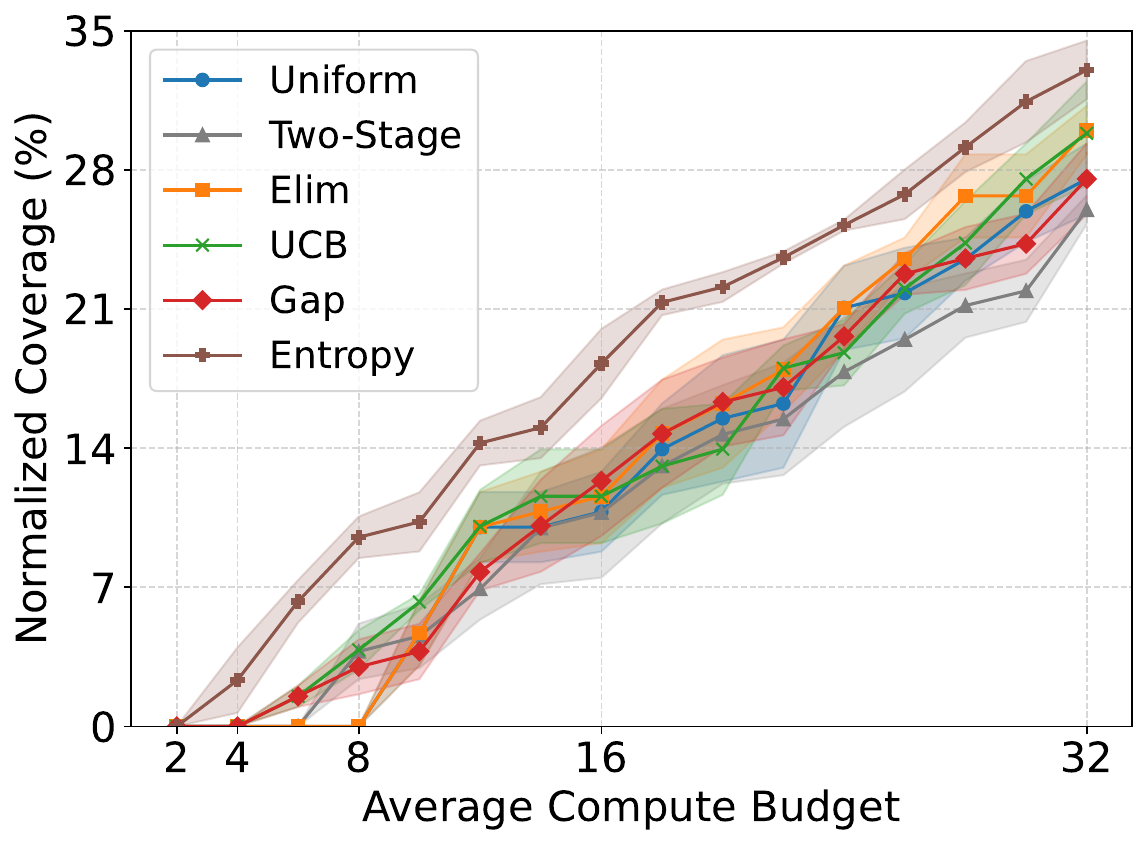}
    \caption{Coverage with \texttt{Llama-3.1-8B-Instruct} on MATH‑500‑Hard‑8}
    \label{fig:model1}
  \end{subfigure}
  \hfill
  \begin{subfigure}[t]{0.45\textwidth}
    \centering
    \includegraphics[width=\linewidth]{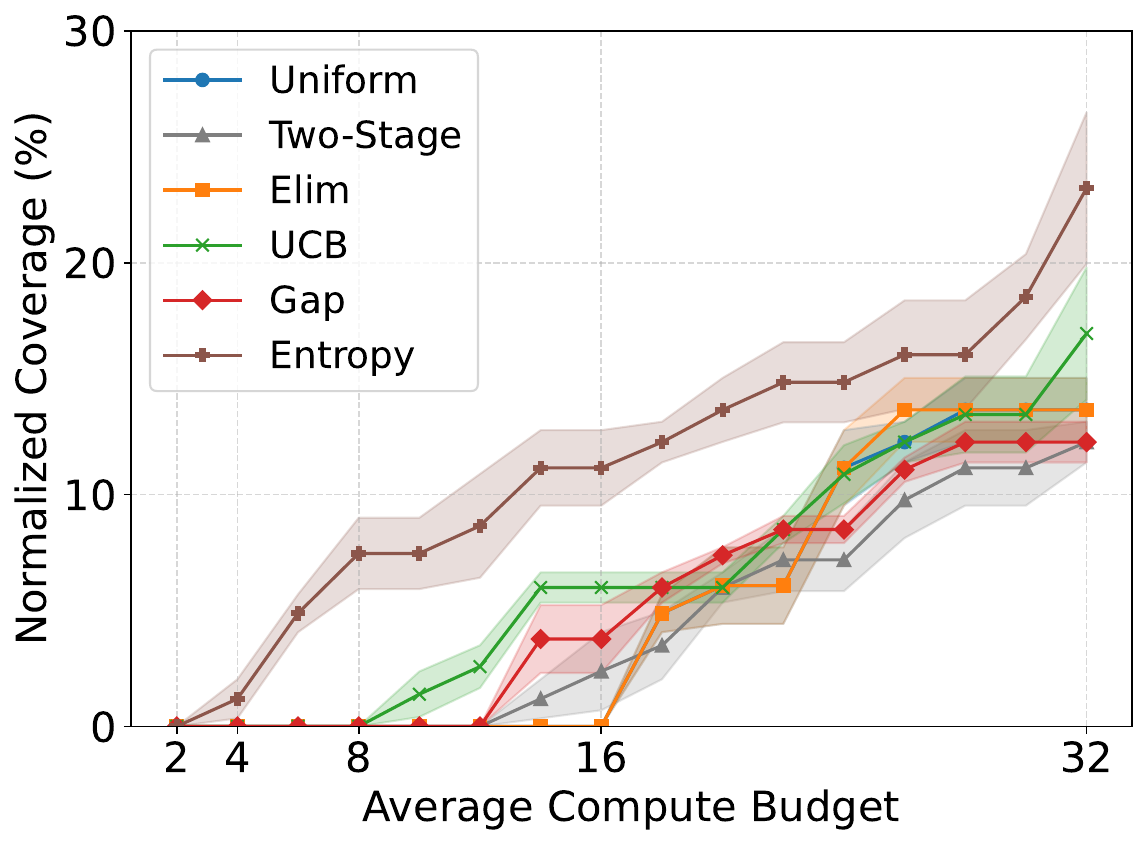}
    \caption{Coverage with \texttt{Llama-3.1-8B-Instruct} on MATH‑500‑Hard‑16}
    \label{fig:model3}
  \end{subfigure}

  \caption{Coverage comparisons on MATH-500-Hard datasets with two language model of different sizes: \texttt{Llama-3.2-1B-Instruct} and \texttt{Llama-3.1-8B-Instruct}.}
  \label{fig:models_results}
  \vspace{-10pt}
\end{figure}

\cref{fig:models_results} presents additional results on MATH-500-Hard datasets, including experiments with the \textsc{Gap} algorithm and experiments with the \texttt{Llama-3.1-8B-Instruct} model. These experiments show that our algorithms introduced in \cref{sec:extensions}, particularly \entropytext{} and \textsc{UCB}, achieve early advantages and outperform both \textsc{Uniform} and \textsc{Elimination}. 

\paragraph{Why \entropytext works well on challenging datasets?}
We conduct a detailed analysis on understand why \entropytext performs particularly well on MATH-500-Hard datasets. 
Upon inspecting model responses to challenging queries, we observe that unsolvable queries are more likely to yield invalid outputs (e.g., incomplete or improperly formatted), resulting in lower entropy among their generated responses; in contrast, solvable queries tend to generate more diverse outputs (see \cref{tab:grouped-stats} and \cref{tab:grouped-stats2} for statistics computed from 64 responses per query).
Since \textsc{Entropy} prioritizes queries with higher entropy, it naturally allocates more compute to those that are more likely to be solvable---explaining its strong empirical performance on challenging problems.
We expect this behavior to generalize to other challenging benchmarks, provided that invalid responses can be reliably identified.  
In such settings, \entropytext offers an effective means to shift compute toward promising queries and achieve better performance under limited compute budget.

\begin{table}[ht]
  \centering
  \caption{Aggregated statistics by query group on MATH-500-Hard-8.}
  \label{tab:grouped-stats}
  \begin{tabular}{@{}lccc@{}}
    \toprule
    \textbf{Query group} & \textbf{$\#$questions} & \textbf{Entropy (mean)} & \textbf{Invalid answers (\%)} \\
    \midrule
    Unsolvable & 49 & 4.26 & 19.45\% \\
    Solvable & 22 & 4.52 & 12.45\% \\
    \bottomrule
  \end{tabular}
\end{table}

\begin{table}[H]
  \centering
  \caption{Aggregated statistics by query group on MATH-500-Hard-16.}
  \label{tab:grouped-stats2}
  \begin{tabular}{@{}lccc@{}}
    \toprule
    \textbf{Query group} & \textbf{$\#$questions} & \textbf{Entropy (mean)} & \textbf{Invalid answers (\%)} \\
    \midrule
    Unsolvable & 43 & 4.33 & 18.39\% \\
    Solvable & 13 & 4.53 & 13.10\%\\
    \bottomrule
  \end{tabular}
\end{table}

\paragraph{Hyperparameter study of $\lambda$ for \entropytext and \ucbtext.}
We study the effect of the exploration coefficient $\lambda$ in our Entropy- and UCB-based extension rules on the MATH-500-Hard-8 dataset. Results in \cref{tab:entropy_alpha,tab:ucb_alpha}
suggest that \entropytext benefits from larger $\lambda$, while UCB prefers smaller $\lambda$.

\begin{table}[H]
\centering
\caption{Effect of $\lambda$ on Entropy-based Allocation.}
\label{tab:entropy_alpha}
\begin{tabular}{lcccc}
\toprule
\textbf{$\lambda$ choice / Avg compute budget} & \textbf{4} & \textbf{8} & \textbf{16} & \textbf{32} \\
\midrule
Entropy ($\lambda = 1.0$) & 0.0\% & 5.4\% & 8.9\% & 17.9\% \\
Entropy ($\lambda = 2.0$) & 0.0\% & 5.4\% & 8.9\% & 17.9\% \\
Entropy ($\lambda = 3.0$) & 0.0\% & 5.4\% & 8.9\% & 21.4\% \\
\bottomrule
\end{tabular}
\end{table}

\begin{table}[H]
\centering
\caption{Effect of $\lambda$ on UCB-based Allocation.}
\label{tab:ucb_alpha}
\begin{tabular}{lcccc}
\toprule
\textbf{$\lambda$ choice / Avg compute budget} & \textbf{4} & \textbf{8} & \textbf{16} & \textbf{32} \\
\midrule
UCB ($\lambda = 1.0$) & 0.0\% & 1.8\% & 3.6\% & 16.1\% \\
UCB ($\lambda = 2.0$) & 0.0\% & 1.8\% & 3.6\% & 14.3\% \\
UCB ($\lambda = 3.0$) & 0.0\% & 1.8\% & 1.8\% & 12.5\% \\
\bottomrule
\end{tabular}
\end{table}

\end{document}